\documentclass[oneside,11pt]{article} 

\hoffset=0in \voffset=0in \evensidemargin=0in \oddsidemargin=0in
\textwidth=6.5in \topmargin=0in \headheight=0.0in \headsep=0.0in
\textheight=9in

\renewenvironment{abstract}
  {{\centering\large\bfseries Abstract\par}\vspace{0.7ex}%
    \bgroup
       \leftskip 20pt\rightskip 20pt\small\noindent\ignorespaces}%
  {\par\egroup\vskip 0.25ex}

{\par\egroup\vskip 0.25ex}

\usepackage{algorithm,algorithmic}
\usepackage{amssymb,amsmath,wrapfig,amsthm}
\usepackage[algo2e,ruled,linesnumbered]{algorithm2e}
\usepackage{xcolor}
\usepackage{graphicx}
\usepackage{booktabs}
\usepackage{multirow}
\usepackage{hyperref}
\numberwithin{equation}{section}

\mathchardef\dash="2D
\newtheorem{theorem}{Theorem}
\newtheorem{lemma}[theorem]{Lemma}

\newtheorem{corollary}[theorem]{Corollary}
\newtheorem{proposition}[theorem]{Proposition}

\newtheorem{assumption}{Assumption}
\newtheorem{remark}[theorem]{Remark}

\renewcommand{\bar}[1]{\overline{#1}}


\renewcommand{\P}{\mathcal{P}}
\renewcommand{\H}{\mathcal{H}}
\newcommand{\F}{\mathcal{F}}
\newcommand{\W}{\mathcal{W}}

\DeclareMathOperator{\E}{\mathbb{E}}
\newcommand{\R}{\mathbb{R}}


\newcommand{\poly}{\mathrm{poly}}
\newcommand{\maj}{\mathrm{Maj}}
\newcommand{\smaj}{\mathrm{Maj{\dash}size}}
\newcommand{\err}{\mathrm{err}}

\newcommand{\A}{\mathcal{A}}
\newcommand{\D}{{D}}
\newcommand{\X}{\mathcal{X}}
\newcommand{\Y}{\mathcal{Y}}

\newcommand{\DX}{{\D_{X}}}
\newcommand{\PC}{\mathcal{P}^{{C}}}
\newcommand{\PL}{\mathcal{P}^{{L}}}
\newcommand{\WC}{\mathcal{W}^{{C}}}
\newcommand{\WL}{\mathcal{W}^{{L}}}
\newcommand{\tmin}{t_{\min}}
\newcommand{\tmax}{t_{\max}}
\newcommand{\assign}{\leftarrow}
\newcommand{\labeloverhead}{\Lambda_{L}}
\newcommand{\compareoverhead}{\Lambda_{C}}

\newcommand{\oracletrust}{\mathcal{O}_{\mathcal{T}}}
\newcommand{\expert}{\mathcal{E}}

\newcommand{\prunelabel}{\textsc{Prune-and-Label}\xspace}
\newcommand{\complabel}{\textsc{Prune-Compare-and-Label}\xspace}
\newcommand{\filter}{\textsc{Semi-verified-Filter}\xspace}

\newcommand{\quicksort}{\textsc{Randomized Quicksort}\xspace}
\newcommand{\svquicksort}{\textsc{Semi-verified-Quicksort}\xspace}
\newcommand{\binarysearch}{\textsc{Semi-verified-BinarySearch}\xspace}
\newcommand{\test}{\textsc{Test}\xspace}
\newcommand{\Talpha}{T_{\alpha,\eta}}
\newcommand{\Tbeta}{T_{\beta,\eta}}


\renewcommand{\(}{\left(}
\renewcommand{\)}{\right)}
\renewcommand{\[}{\left[}
\renewcommand{\]}{\right]}

\newcommand{\sign}[1]{\mathrm{sign}\(#1\)}

\newcommand{\abs}[1]{\left\lvert #1 \right\rvert}


\usepackage{enumitem}

\newcommand{\citet}{\cite}
\newcommand{\citep}{\cite}


\title{Semi-Verified PAC Learning from the Crowd}
\author{
{Shiwei Zeng}\\
\texttt{szeng4@stevens.edu}\\
Stevens Institute of Technology
\and
{Jie Shen}\\
\texttt{jie.shen@stevens.edu}\\
Stevens Institute of Technology
}
\date{\today}
\begin{document}
\maketitle

\begin{abstract}
We study the problem of {\em crowdsourced PAC learning} of threshold functions. This is a challenging problem and only recently have query-efficient algorithms been established under the assumption that a noticeable fraction of the workers are {\em perfect}. In this work, we investigate a more challenging case where the {\em majority} may behave adversarially and the rest behave as the Massart noise~--~a significant generalization of the perfectness assumption. We show that under the {semi-verified model} of Charikar~et~al.~(2017), where we have (limited) access to a trusted oracle who always returns correct annotations, it is possible to PAC learn the underlying hypothesis class with a manageable amount of label queries. Moreover, we show that the labeling cost can be drastically mitigated via the more easily obtained comparison queries. Orthogonal to recent developments in semi-verified or list-decodable learning that crucially rely on data distributional assumptions, our PAC guarantee holds by exploring the wisdom of the crowd.
\end{abstract}

\section{Introduction}\label{sec:intro}



Efficient and robust learning of threshold functions is arguably one of the most important problems in machine learning, and has been broadly investigated under different label noise models, such as the random classification noise~\citep{angluin1987learning}, Massart noise~\citep{sloan1988types,massart2006risk}, Tsybakov noise~\citep{tsybakov2004optimal}, and agnostic noise~\citep{haussler1992decision,kearns1992toward}. In recent years, under certain distributional assumptions on the unlabeled data, a variety of computationally efficient algorithms have been established~\citep{awasthi2017power,zhang2020efficient,diakonikolas2020learning,diakonikolas2020polynomial,shen2021power} under the {\em standard} probably approximately correct (PAC) learning model of \cite{valiant1984theory}, where there is a {\em single} adversary who generates all the noisy labels.\footnote{In the standard PAC learning model, the label of an instance is always persistent.}

However, the standard PAC learning model of \cite{valiant1984theory} does not fully characterize the process of data collection and learning in modern machine learning applications. In fact, when constructing a large-scale data set, researchers often appeal to a crowdsourcing platform to hire {\em multiple} crowd workers for annotation, with the hope of obtaining a set of high-quality labels. To be more concrete, a crucial feature of {\em crowdsourced learning} is that for each  instance $x$, the learner has a discretion to collect a set of $k$ labels, denoted by $y_{i}(x), i=1,\dots,k$, from a large pool of heterogeneous crowd workers. Here, each single worker in the pool may behave as a certain type of label noise models in the standard PAC model, and some  workers can even collude with  others to decide which instances to corrupt among those assigned to them. A common practice to address such noisy annotations is to aggregate the labels via majority vote, which turns out to be a successful remedy provided that the majority are correct \citep{dekel2009vox,vaughan2017making,awasthi2017efficient,zeng2022crowd}.

In this paper, nevertheless, we consider learning from an extremely noisy pool of workers, where the majority might be adversarial, under which using majority voting will make the result even noisier, and more seriously, rendering PAC learning impossible. This problem, in addition to being theoretically interesting and has been broadly studied under the non-crowdsourcing setting \citep{charika2017learning}, also finds a variety of real-world applications \citep{steinhardt2016avoid,prelec2017solution,meister2018data}. For example, it is observed that when confronted with the question ``is Philadelphia the capital of Pennsylvania'', more than $60\%$ of the respondents endorse the incorrect answer with high confidence \citep{prelec2017solution}. This raises a  pressing question:

\begin{quote}
When the majority of the crowd might be malicious, can we still achieve the PAC guarantees? If the answer is positive, what is the price for it?
\end{quote}


\subsection{Warmup: reduction to standard PAC learning}

We begin our exploration by considering a quite fundamental problem setup: the underlying hypothesis class is the one of homogeneous halfspaces among which there exists a hypothesis that correctly classifies all  instances~\citep{rosenblatt1958perceptron,valiant1984theory}. For each instance we assign one worker for labeling. 

Formally, let $\X \subset \R^d$ be the instance space, $\Y = \{-1, 1\}$ be the label space, and $D$ be the marginal distribution on $\X$. The class of homogeneous halfspaces is given by $\H_{\mathrm{hs}} = \{h: x \mapsto \sign{w\cdot x}, \ w\in \R^d\}$, and the true label of $x$ is given by $h^*(x)$ for some $h^* \in \H_{\mathrm{hs}}$. The error rate of any hypothesis $h$ is defined as $\err_{D}(h) = \Pr_{x \sim \D}\big(h(x) \neq h^*(x)\big)$. The learner has access to a large pool of crowd workers who provide labels, and we denote by $\PL$ the uniform distribution on it. Given $x \in \X$, the quantity $\eta_t(x) := \Pr(h_t(x) \neq h^*(x) \mid x)$ characterizes how likely the learner gets an incorrect label when having worker $t$ annotate the instance $x$, where the probability is taken over the internal random bits of the worker.

To further simplify the problem, we assume that a $(1-\alpha)$-fraction of workers may be adversarial but the remaining $\alpha$-fraction are {\em perfect} in the sense that they always provide correct labels across the domain.
\begin{assumption}\label{as:perfect}
With probability $1-\alpha$ over the draw of a worker $t$, $t$ is adversarial, i.e. $\eta_t(x)$ can be an arbitrary function; otherwise, $t$ is perfect, i.e. $\eta_t(x) \equiv 0,\forall x \in \X$.
\end{assumption}



We aim to approach the problem by reducing it to some well-studied standard PAC learning model, such that there exists a learner with non-trivial PAC guarantees. One promising solution is the {\em agnostic learner}, which works under the scenario where an adversary can corrupt an arbitrary fraction of the labels in an adversarial manner~\citep{haussler1992decision,kearns1992toward,kalai2008agnostically,dia2021agnostic}. We give the following theorem using results from \citet{kalai2008agnostically,dia2021agnostic}.

\begin{theorem}\label{thm:agnostic}
Suppose Assumption~\ref{as:perfect} is satisfied, and further assume the maginal distribution $\D$ is standard Gaussian. Given a set of $\Theta\big(d^{\poly({1}/{\epsilon})}\big)$ independent instances from $\D$, each of which is labeled by one worker, there exists an algorithm that runs in time $O\big(({d}/{\epsilon})^{\poly({1}/{\epsilon})}\big)$ and learns $\H_{\mathrm{hs}}$ by returning a hypothesis with error rate $\leq (1-\alpha) +\epsilon$.
\end{theorem}

\begin{remark}\label{rmk:agnostic}
There are a few limitations of such method. First, the error rate is non-vanishing (as $\alpha < 1/2$), though this is the best one can hope for without extra information. Second, such reduction scheme only works for the very special case of learning homogeneous halfspaces with respect to Gaussian marginals, and with quite high computational complexity and sample complexity. Last, the success of the reduction hinges on the strong assumption that an $\alpha$ fraction of the workers are perfect. 
\end{remark}

In this paper,  we show that it is possible to break all of the above barriers by leveraging the wisdom of the crowd and by querying a trusted oracle with a  small number of times.


\subsection{Main results}

Observe that using a single worker to annotate an instance is not a common practice. Thus, from now on, we consider the more realistic scenario where the learner would assign an instance to multiple workers. An immediate caveat in our setting is that, the majority vote becomes even noisier. As a remedy, we follow a recent research line of \citet{steinhardt2016avoid,awasthi2017efficient,meister2018data} and assume that there exists a {\em trusted oracle} $\oracletrust$ who always provides the correct annotation for any instance that we query on, and we call a query to $\oracletrust$ as a verified query. To make it practical, we must restrict the number of verified queries the learner can make. We now consider a more general hypothesis class $\H$ of threshold functions, i.e. $\H = \{h: x \mapsto \sign{f(x)}, f \in \F\}$ where $\F = \{f: \X \rightarrow \R\}$; e.g. $f(x) = w \cdot x$ recovers the class of linear functions and hence $\H$ is the class of halfspaces. Note that the target classifier $h^*$ can be written as $h^*(x) = \sign{ f^*(x) }$ for some $f^* \in \F$.  We also drop Assumption~\ref{as:perfect} on the existence of perfect workers, and instead assume the following:

\begin{assumption}\label{as:massart-label}
With probability $1-\alpha$ over the draw of a worker $t$, $t$ is adversarial, i.e. $\eta_t(x)$ can be an arbitrary function; otherwise, $t$ is Massart, i.e. $\eta_t(x) \leq \eta, \forall x\in\X$ for some noise rate $\eta \in [0, 1/2)$.
\end{assumption}

The non-adversarial workers now act as the Massart noise \citep{sloan1988types,massart2006risk} which is a challenging semi-random model that has attracted a surge of recent research interests under the standard PAC model; see e.g. \citet{diakonikolas2019distribution,zhang2020efficient,diakonikolas2020learning}. Our model on the crowd workers is thus a mixture of adversarial and semi-random, with the adversarial consisting of the majority. The model itself is already new and challenging; to the best of our knowledge, no known results have been established under neither standard nor crowdsourced setting.

Now the principle of our algorithmic design is three-fold: 1) offering a PAC guarantee, i.e. finding a hypothesis $h$ such that $\err_{D}(h) \leq \epsilon$; 2) the number of verified queries, $m_V$, is few; and 3) the number of queries to the crowd, $m_L$, is moderate. In particular, we hope that the overhead of the algorithm, $\labeloverhead$, defined as the averaged number of crowd queries on a given instance, behaves as a constant.



We note that the last (implicit) condition is that the hypothesis class $\H$ has finite VC-dimension $d$, which is a minimum requirement for PAC learnability even without  noise \citep{kearns1994intro,anthony1999neural}.

\begin{assumption}\label{as:pac}
There exists a computationally efficient algorithm $\A_\H$ satisfying the following: for any $\epsilon, \delta \in (0, 1)$, $\A_\H$ draws $m_{\epsilon, \delta}$ {\em correctly labeled instances} and learns $\H$ by returning a hypothesis $h$ such that with probability $1-\delta$, $\err_{\D}(h) \leq \epsilon$. We call $\A_\H$ a {\em realizable PAC learner}.
\end{assumption}

\begin{remark}
It is known that for many interesting hypothesis classes, such as the one of polynomial threshold functions, such a realizable PAC learner does exist.
In our algorithm and analysis, we will make use of the well-known fact that it suffices to pick
\begin{equation}\label{eq:m-eps-delta}
m_{\epsilon,\delta} = K \cdot \frac{1}{\epsilon} \cdot \Big( d \log\frac{1}{\epsilon} + \log\frac{1}{\delta} \Big),
\end{equation}
where $K \geq 1$ is some absolute constant and $d$ is the VC-dimension of the class $\H$ \citep{kearns1994intro,anthony1999neural}. 
\end{remark}

\begin{theorem}[Theorem~\ref{thm:main_labelonly}, informal]\label{thm:informal_label}
Suppose that Assumptions~\ref{as:massart-label} and \ref{as:pac} are satisfied, and $\eta<\frac{\alpha}{16}$. There exists a polynomial-time algorithm (Algorithm~\ref{alg:boost}) that PAC learns $\H$. In addition, $\labeloverhead = {O}_{\alpha}(1)$ and $m_V = O_{\alpha}(1)$. 
\end{theorem}

\begin{remark}
The above  result addresses all the shortcomings  of the naive reduction approach (see Remark~\ref{rmk:agnostic}). Moreover, when the fraction of adversarial workers is a large constant, say $80\%$, our algorithm still succeeds, and more importantly, only makes a constant number of verified queries to the trusted oracle as well as a constant overhead~--~as to be clear later, a constant overhead means the total number of label queries to the crowd is only a constant multiple of the one needed in the noise-free PAC model of Assumption~\ref{as:pac}. Roughly speaking, we give an algorithm that almost does not suffer any performance loss from  the extremely noisy crowd.
\end{remark}

\begin{remark}\label{rmk:T}
The most relevant prior work is \cite{awasthi2017efficient}, where they also studied crowdsourced PAC learning in the presence of large fraction of adversarial workers. Yet, their analysis crucially relies on the existence of an $\alpha$ fraction perfect workers (i.e. Assumption~\ref{as:perfect}). When $\alpha > 1/2$, it is possible to relax their perfectness assumption to the Massart noise model as we have done. However, the case $\alpha \leq 1/2$ is more subtle and requires additional treatments (and this is one of our technical contributions).
\end{remark}


On top of Theorem~\ref{thm:informal_label}, we address the problem of learning with pairwise comparisons, where a worker is asked questions of the form ``given two instances, which one is more likely to be positive''. The motivation is that on a crowdsourcing platform, experimenters often make a same payment per annotation, and pairwise comparison queries are relatively easier to respond than label queries in many applications, thus may ease the process of data acquisition. For example, there are restaurants that people feel neither like or dislike, but comparing the preference to two restaurants might be easy. Another example is about medical diagnosis: determining whether a patient needs intensive care requires evaluation from specialists, but comparing the health status of two patients may be carried out by medical residents ~\citep{park2015preference,kane2017active,xu2017noise}.



Formally, we consider the class $\H$ of threshold functions.
For any $(x, x') \in \X \times \X$, the underlying comparison function is given by $Z^*(x, x') = \sign{f^*(x) - f^*(x')}$. Denote by $\PC$ the uniform distribution over the pool of workers who will provide comparison tags, and a worker $t$ is defined by a comparison function $Z_{t}: \X \times \X \rightarrow \{-1, 1\}$.

\begin{assumption}\label{as:massart-comp}
When providing comparison tags, a $(1-\beta)$ fraction of the workers may behave adversarially, and the rest $\beta \leq 1/2$ fraction are such that $\Pr(Z_t(x, x') \neq Z^*(x, x') \mid (x, x')) \leq \eta$ for some $\eta \in [0, 1/2)$.
\end{assumption}

Let $m_C$ be the total number of comparison queries to the crowd, $\compareoverhead$ the comparison overhead, and $m_V$ be the total number of verified label and comparison queries. Our second set of main results is a {\em label-efficient} crowdsourced PAC algorithm.

\begin{theorem}[Theorem~\ref{thm:main_bothAlphaBeta}, informal]\label{thm:informal_both}
Suppose that Assumptions~\ref{as:massart-label}, \ref{as:pac}, and \ref{as:massart-comp} are satisfied, and $\eta < \frac{\min(\alpha, \beta)}{16}$. Given any $\epsilon, \delta \in (0, 1)$, there exists a polynomial-time algorithm (Algorithm~\ref{alg:boost-comp}) that PAC learns $\H$. In addition, $\labeloverhead = o_{\alpha,\beta}(1)$, $\compareoverhead = O_{\alpha,\beta}(1)$, and $m_V = O_{\alpha,\beta}(1)$. 
\end{theorem}


\begin{remark}\label{rmk:onlyAlpha}
This is the first label-efficient algorithm that works in the semi-verified crowdsourced learning setting where adversarial workers can form a strong majority. Moreover, both the overheads and the number of verified data are independent of the sample size, meaning that the algorithm is query-efficient even for large-scale learning problems. The label efficiency follows from that $\labeloverhead$ may tend to zero; alternatively, it also follows from the fact that $m_L = \tilde{O}(\log\frac{d}{\epsilon})$, where in label-only learning, this quantity scales as $\tilde{O}(\frac{d}{\epsilon})$ (Theorem~\ref{thm:main_labelonly}). We note that a label-efficient crowdsourced PAC algorithm was obtained in \cite{zeng2022crowd}, but their analysis only works for $\alpha, \beta > \frac12$.
\end{remark}
\begin{remark}
The merit of comparison-based PAC learning algorithms is to tradeoff the labeling and comparison complexity \citep{xu2017noise}, rather than obtaining a lower overall query complexity. Indeed, the latter is information-theoretically impossible in general: a lower bound of $\Omega(d + 1/\epsilon)$ was shown in \cite{kane2017active}.
\end{remark}


\subsection{Overview of techniques}\label{sec:technique}

\noindent
{\bfseries Win-win pruning of adversarial workers.} \ 
When the adversarial workers form a majority in the crowd, a natural idea is to query the trusted oracle so as to identify and prune them. The main question here lies in how to select appropriate instances as testing cases such that a significant fraction of the adversarial workers can be pruned every time the trusted oracle is called. Our main contribution here is a {\em win-win argument} even when no perfect worker is present: the adversarial workers can either collude and provide incorrect annotations, for which the algorithm can identify and prune a noticeable fraction of them; or they are forced to provide correct annotations, for which we can be sure that the majority are correct. The key insight lies in a natural threshold of the majority size (the fraction of workers that agree with majority) being $(1-(1-\eta)\alpha)$, as on all accounts the adversarial workers cannot form a majority larger than that. This is also our condition for selecting the instances as testing cases, which generalizes the analysis in \citet{awasthi2017efficient}. See Section~\ref{subsec:labelonly}.

\medskip
\noindent
{\bfseries Query and label-efficient semi-verified filtering.} \ 
To obtain an algorithm that is both label and query-efficient (Theorem~\ref{thm:informal_both}), we need to resolve a more serious issue of majority voting over a constant number of workers: if the sample size increases, this majority voting can be incorrect with high probability. If we were to increase the constant number, the query complexity grows beyond what we can afford. To this end, we introduce a new testing stage to the comparison-equipped algorithm which draws a moderate set of informative samples and then queries comparison tags. This stage would give us enough information about how likely the crowd workers would form a strong majority, i.e. the majority size $\geq1-(1-\eta)\beta$ such that the majority vote is faithful. We show that this technique suffices for us to develop a label and query-efficient algorithm. Readers may refer to Section~\ref{subsec:comp-equipped} for details.

\subsection{Additional notations and definitions}\label{sec:setup}

We will  need the following quantity when designing comparison-based algorithms:
\begin{equation}\label{eq:n-eps-delta}
n_{\epsilon,\delta} := K \cdot \frac{1}{\epsilon} \cdot \Big( d \log\frac{1}{\epsilon} + \Big(\frac{1}{\delta}\Big)^{1/1000} \Big).
\end{equation}
The additive term $\big(\frac{1}{\delta}\big)^{{1/1000}}$ is due to the fact that the failure probability of randomized quicksort only decays to zero in a rate inversely polynomial in the sample size. There is nothing magic on the exponent $\frac{1}{1000}$; it can be made to be an arbitrarily small constant in the price of a constant factor of more samples. We note that we can also use merge sort in place of the quicksort to obtain a $\log(1/\delta)$ dependence. However, since the latter is broadly deployed in practice, we choose to bring it upfront and leave the analysis with merge sort to interested readers.

\medskip
\noindent
{\bfseries Query complexity, overhead.} \ 
The query complexity of a crowdsourced PAC learner is measured in three aspects: the total number of label queries from the crowd which we denote by $m_L$, that of comparison queries from the crowd which is denoted by $m_C$, and that of verified queries from the trusted expert which is denoted by $m_V$. Closely related to $m_L$ and $m_C$ are the following quantities: 
\begin{equation}
\labeloverhead := \frac{m_L}{m_{\epsilon,\delta}},\quad \compareoverhead := \frac{m_C}{m_{\epsilon,\delta}},
\end{equation}
termed {\em labeling overhead} and {\em comparison overhead}, respectively. These two quantities measure the query complexity of a crowdsourced PAC learner compared to the one needed in the noise-free standard PAC model. We say a crowdsourced PAC learner is {\em query-efficient} if $\labeloverhead + \compareoverhead + m_V= O(1)$, meaning that the overall query complexity is of the same order of the standard PAC model even facing an overwhelming fraction of adversarial workers. We say a crowdsourced PAC learner is {\em label-efficient} if $\labeloverhead = o(1)$. Our goal is to design a polynomial-time PAC learning algorithm that is both query-efficient and label-efficient; in addition, we aim to control the number of calls to the trusted expert as otherwise the problem is trivial.

Given an input $z$ and a set of workers $\W$, we define $\maj_{\W}(z)$ to be the majority vote over the annotations from the workers in $\W$, where $z$ can either be an instance or a pair of instances. We denote by $\smaj_{\W}(z)$ the fraction of the workers in $\W$ that agree with $\maj_{\W}(z)$, which is referred to as the empirical majority size. Similarly, we can consider the population version $\smaj_{\P}(z) = \max\{ \Pr_{t\sim\P}( h_t(z) = 1), \Pr_{t\sim\P}( h_t(z) = -1)\}$.
Specifically, let $h_1$, $h_2$ and $h_3$ be three classifiers. The function $\maj(h_1, h_2, h_3)$ maps any instance $x$ to the label $y=\maj_{\{h_1, h_2, h_3\}}(x)$.
Given a distribution $\P$ over workers and an annotated set $S$ (either a set with labels or one with comparison tags), denote by $\P_{\mid S}$ the distribution $\P$ conditioned on workers that agreed with the annotations in $S$; this will be useful to prune the adversarial workers.
Given any region $R\subseteq\X$, denote by $\D[R]$ the density of instances $x\in R$ under distribution $\D$.

\section{Related Works}\label{sec:related}


Crowdsourcing has been broadly utilized as a tool for data annotation with provable algorithms \citep{vaughan2017making}. More in line with this work is crowdsourcing in the presence of highly noisy workers, i.e. the majority can be adversarial.  \citet{steinhardt2016avoid,meister2018data} considered recovery of true values of a finite data set and \citet{awasthi2017efficient} examined PAC learnability which aligns more closely with our work. Our algorithm shares the merits with \citet{awasthi2017efficient} but we develop a novel analysis to address the label corruption from Massart workers, and we carefully leverage comparison queries to achieve label efficiency. We emphasize that crowdsourced PAC learning was also investigated in \citet{zeng2022crowd}, but their algorithm fails immediately when the majority are incorrect.

The  semi-verified learning model was recently proposed in \citet{charika2017learning}. This model is also known as list-decodable learning where a learner is allowed to output a finite list of hypotheses among which at least one is guaranteed to be close to the true hypothesis. 
Such model has been broadly studied under the non-crowdsourcing setting for various problems such as mean estimation~\citep{kothari2017better,dia2020list,zsLDsparse2022}, learning mixtures of Gaussian~\citep{dia2018list}, linear regression~\citep{karmalkar2019list,raghavendra2020list}, and subspace recovery~\citep{bakshi2020list,raghavendra2020subspace}. We note, however, these works consider a setting where both instances and labels can be corrupted adversarially, while our work assumes that only the labels are noisy (since in crowdsourcing, the data curator often has control on the instances).


Learning with comparison queries has been extensively investigated in ranking~\citep{jamieson2011active,shah2016stochastic,shah2017simple,falahatgar2017max,shah2019feel,agarwal2020rank}, best-$k$ item selection~\citep{falahatgar2017max,ren2020sample}, and recommendation systems~\citep{furnkranz2010prefer,falahatgar2018limit}. PAC learning with pairwise comparisons is investigated recently~\citep{kane2017active,xu2017noise,hopkins2020noise}; yet, the way that we utilize comparison queries is quite different from these works and their guarantees were established under the non-crowdsourcing model.

\section{Main Algorithms and Guarantees}\label{sec:algorithm}

In this section, we present our main algorithms, one that uses label-only queries (Algorithm~\ref{alg:boost}) and the other that  combines label and comparison queries (Algorithm~\ref{alg:boost-comp}). In the label-only learning model, we will build upon the algorithm of \citet{awasthi2017efficient} and show how a new pruning method (specifically, a new pruning threshold) can be leveraged into their algorithm to obtain Theorem~\ref{thm:informal_label}. In the comparison-based learning model, we will extend the algorithm of \citet{zeng2022crowd} by presenting an ensemble of new algorithmic subroutines that delicately combine verified queries, label queries, and comparison queries.

First of all, we introduce the core idea of these algorithms, assuming for now that the adversarial workers only consisting a minority. We emphasize that obtaining distribution-free PAC guarantees under label noise in the standard PAC model is highly nontrivial, and among prevalent noise models, the only known computationally efficient PAC learner works for the very basic random classification noise \citep{blum1996polynomial}, while for Massart noise or adversarial noise, hardness results have been established \citep{diamassarthard,guruswami2006hardness}. Yet, it turns out that as far as the majority is correct, by using majority vote, it is possible to PAC learn the underlying hypothesis class without distributional assumptions, where a key step is to gather sufficient labels for each instance so that with overwhelming probability, all the aggregated labels are correct, for which one can apply the realizable learner stated in Assumption~\ref{as:pac}. However, a naive label-and-train approach, as observed in \citet{awasthi2017efficient}, leads to a labeling overhead that grows with the sample size. Therefore, they proposed an elegant  algorithm that interleaves data annotation and learning. In particular, there are three phases in the main algorithm: in phase 1, it aims to learn a hypothesis $h_1$ with $\err_{D}(h_1) \leq \frac{1}{2}\sqrt{\epsilon}$; in phase 2, it identifies a distribution $D_2$ where the performance of $h_1$ is almost comparable to random guess, and learns another $h_2$ with $\err_{D_2}(h_2) \leq \frac{1}{2}\sqrt{\epsilon}$; in phase 3, it concentrates on a distribution $D_3$ where $h_1$ and $h_2$ disagree, and learns the last hypothesis $h_3$ with $\err_{\D_3}(h_3) \leq \frac{1}{2}\sqrt{\epsilon}$. The final output of the algorithm, $\hat{h}$, is a majority voting function over these three hypotheses. This is essentially the boosting algorithm of \citet{schapire1990strength} which enjoys a guarantee that $\err_{D}(\hat{h}) \leq \epsilon$. 

Our algorithm in the semi-verified setting will be built upon the above, and the primary idea is to design query-efficient pruning methods to remove most adversarial workers, such that the Massart ones form the majority. To be more detailed, we will incorporate new pruning methods in all three phases to learn the desired hypotheses, while in some cases, additional technical efforts are needed to ensure the correctness of our results. We remark that the difference to obtain  Theorem~\ref{thm:informal_both} compared to Theorem~\ref{thm:informal_label} is on how to use label queries in an economic way through the introduction of comparison queries. 




\subsection{Learning with label-only queries: Algorithm~\ref{alg:boost}}\label{subsec:labelonly}


Now let us consider that the majority are adversarial. A similar semi-verified crowdsourcing model is also studied in \cite{awasthi2017efficient} but their analysis, in particular, the success of their paradigm to prune adversarial workers, crucially relies on Assumption~\ref{as:perfect} to ensure that the perfect workers will always be retained during pruning, and thus eventually form the majority. {This is because perfect workers make no mistake, so that none of them will be removed during the pruning step and hence the fraction of the perfect ones will increase and eventually form the majority. We here leverage a key observation that, even when some perfect workers are removed, it is still possible to increase the fraction $\alpha$, so long as the fraction of the adversarial workers being removed is always strictly larger than that of the perfect workers. Hence, we can relax the assumption of {perfect workers} to  {Massart} workers (Assumption~\ref{as:massart-label}). }

The main algorithm is given in Algorithm~\ref{alg:boost}, which contains three phases corresponding to our previous discussion. The most important component of the main algorithm is \prunelabel (Algorithm~\ref{alg:prune-label}). For each given instance $x_t$, in Step~\ref{step:check_maj_size} of \prunelabel, we check whether the current fraction of Massart workers, $\alpha$, has formed the majority (note that we will update $\alpha$, thus it has potential to increase), or the  size of workers that agree with the majority vote is greater than a threshold. If either condition is satisfied, we take the majority vote as the label of $x_t$. What is subtle here is our condition on the majority size, which is essentially our pruning rule. To see why it makes sense, observe that at the beginning, the adversarial workers form a $(1-\alpha)$ fraction, while the Massart workers form the $\alpha$ fraction among which, in expectation, at most $\alpha \eta$ will return incorrect labels. Therefore, the maximum number of incorrect labels will be $1-\alpha + \alpha \eta = 1 - (1-\eta)\alpha$. So in expectation, if we see that the majority size is greater, it must be the case that the Massart workers who are committed to providing correct labels contribute to the voting, hence the majority vote is correct. It is easy to see that there is a constant factor difference in our algorithm, which is due to the conversion from expected value to high probability argument. When Step~\ref{step:check_maj_size} of \prunelabel is not satisfied, it is likely the case that majority vote gives incorrect label. This is a good chance to query the trusted oracle which guarantees elimination of a noticeable fraction of adversarial workers (and a few Massart workers). From these discussions, we can see that it is possible to achieve a win-win scenario: either adversarial workers give incorrect labels so that they can be eliminated using the verified queries, or they have to provide the correct label in order to stay in the pool until the end of the algorithm. Lastly, we show that in the presence of the Massart workers, we have to restart the main algorithm with $O(\Talpha)$ times (see Theorem~\ref{thm:main_labelonly}). Note that for \textsc{Filter} algorithm at Step~\ref{step:cite-filter} of Algorithm~\ref{alg:boost}, we can reuse the one from~\cite{awasthi2017efficient}, and we omit it due to space limit.

\begin{algorithm}[t]
	
	\caption{Main Algorithm (label only)}
	\label{alg:boost}
	
	\begin{algorithmic}[1]
		
		\REQUIRE{Parameters $\alpha$, distributions $\PL$, target error rate $\epsilon$, confidence $\delta$, noise rate $\eta$ of Massart workers, a realizable PAC learner $\A_\H$.}
		
		\ENSURE{Hypothesis $\hat{h}: \X \rightarrow \Y$ such that with probability $1-\delta$, $\err_{\D}(\hat{h})\leq \epsilon$.}
		
		\STATE $\Talpha \gets  \log_{\frac{1}{1-\eta}}(\frac{\alpha-\alpha\eta}{\alpha - \eta})$, 
		$\tau \gets 8 \Talpha$,  $\delta'\assign\frac{\delta}{\tau}$.
	
\vspace{0.05in}	
		\hspace{-12pt}{\textbf{Phase 0}}:
		
		\STATE $\bar{S_0} \assign \prunelabel(S_0,\alpha,\delta')$ for an instance set $S_0$ of size $\frac{1}{\epsilon}\log\frac{1}{\delta'}$ from $\DX$.
		
\vspace{0.05in}
		\hspace{-12pt}{\textbf{Phase 1}}:
		
		\STATE $\bar{S_1} \assign \prunelabel(S_1,\alpha, \frac{\delta'}{6})$,  for an instance set $S_1$ of size $n_{\sqrt{\epsilon}/2,\delta'/6}$ from $\DX$.
		
		\STATE $h_1  \assign \A_\H(\bar{S_1}, \frac{\sqrt{\epsilon}}{2}, \frac{\delta'}{6})$.
		
\vspace{0.05in}
		\hspace{-12pt}{\textbf{Phase 2}}:
		
		\STATE  ${S_I} \assign \textsc{Filter} (S_2, h_1,\alpha, \frac{\delta'}{12})$, for an instance set $S_2$ of size $\Theta(n_{\epsilon,\delta'/12})$ drawn from $\DX$. \label{step:cite-filter}
		
		\STATE Let $S_C$ be an instance set of size $\Theta(n_{\sqrt{\epsilon},\delta'/12})$ drawn from $\DX$.
		
		\STATE  $\overline{S_{\text{All}}} \assign \prunelabel(S_I\cup S_C,\alpha, \delta'/12)$.
		
		\STATE   $\bar{W_I} \assign \{(x,y)\in\overline{S_{\text{All}}}\mid y\neq h_1(x) \}$, $\bar{W_C} \assign \overline{S_{\text{All}}} \backslash \bar{W_I}$.
		
		\STATE Draw a sample set $\bar W$ of size $\Theta(n_{\sqrt{\epsilon}/2,\delta'/12})$ from the distribution $\frac12\bar{W_I} + \frac12 \bar{W_C}$.
		
		\STATE  $h_2 \assign \A_\H(\bar{W}, \frac{\sqrt{\epsilon}}{2}, \frac{\delta'}{6})$.
	
\vspace{0.05in}	
		\hspace{-12pt} \textbf{Phase 3}:
		
		\STATE   $\bar{S_3} \assign \prunelabel(S_3,\alpha, \delta'/6)$, for an instance set $S_3$ of size  $n_{\sqrt{\epsilon}/2,\delta'/6}$ drawn from $\DX$ conditioned on $h_1(x) \neq h_2(x)$.
		
		\STATE   $h_3 \assign \A_\H(\bar{S_3}, \frac{\sqrt{\epsilon}}{2}, \frac{\delta'}{6})$.
		
		{\bfseries return}  $\hat{h} \assign \maj(h_1, h_2, h_3)$.
		
	\end{algorithmic}
	
\end{algorithm}

\begin{algorithm}[t]
	
	\caption{\textsc{Prune-and-Label} }
	\label{alg:prune-label}
	
	\begin{algorithmic}[1]
		
		\REQUIRE {An instance set $S$, parameters $\alpha$, $\delta_1$.}
		\ENSURE {A labeled set $\bar{S}$.}
		
		\STATE  $k_0 \assign \frac{1}{2(1-\eta)^2\alpha^{2}}\cdot\log\frac{1000m}{\delta_1}$.
		
		\STATE Draw a set of $k_0$ workers $\WL$ from $\PL$.
		
		\IF{$\alpha \geq 0.7$ or $\smaj_{\WL}({x}_t)\geq 1-\frac{(1-\eta)\alpha}{4}$}{ \label{step:check_maj_size}
			\STATE	$\hat{y}_t \assign \maj_{\WL}(x_t)$. 
		}\ELSE{
			\STATE	Get a verified label $y_t^* \assign h^*(x_t)$. 
			{\bfseries Restart} Algoirthm \ref{alg:boost} with $\PL \assign \PL_{\mid ({x}_t,y_t^*)}$ and $\alpha \assign \frac{(1-\eta)\alpha}{1-\frac{(1-\eta)\alpha}{8}}$. 
		}
		\ENDIF

		{\bfseries return} $\bar{S}$.
		
	\end{algorithmic}
\end{algorithm}

\medskip
\noindent
{\bfseries Massart noise rate $\eta$.} \ 
Next, we demonstrate why the Massart noise rate $\eta$ cannot be too large. 
In our model, since roughly $\eta \alpha$ fraction of the Massart workers will provide incorrect labels, they will likely be pruned away.  Hence, denote by $\alpha_i$ the fraction of Massart workers in the pool after $i$ restartings of the main algorithm, then a necessary condition on the success of pruning is that the fraction of Massart workers is increased, i.e.
\begin{equation*}
	 \frac{(1-\eta)\alpha_{i}}{1-\frac{(1-\eta)\alpha_{i}}{8}} > \alpha_{i},
\end{equation*}
which is equivalent to $\eta < \frac{(1-\eta)\alpha_i}{8}, \forall i\geq0$. Assume the algorithm works properly and $\alpha_i\geq\alpha_{i-1},\forall i$. Then, we require that $\eta < \frac{\alpha}{16}$ by telescoping, where $\alpha$ is the fraction of Massart workers in the crowd before running the algorithm. 



\medskip
\noindent
{\bfseries Restarting times.} \ 
Note that each time the main algorithm restarts, it removes at least $\frac{(1-\eta)\alpha}{8}$ fraction of the workers, with at most an $\eta\alpha$ fraction being Massart. Therefore, denote by $N_i$ the total number of workers in the pool after $i$ restartings of the main algorithm 
\begin{align*}
\alpha_i &\geq \frac{\alpha_0 N_0\cdot\(1-\eta\)^{i}}{N_0 - \frac{(1-\eta)\alpha_0}{8}\cdot N_0 - \dots - \frac{(1-\eta)\alpha_{i-1}}{8}\cdot N_{i-1}}.
\end{align*}
By simple deductions on the above inquality, we can show that after $O\big(\log_{\frac{1}{1-\eta}} (\frac{\alpha-\alpha\eta}{\alpha - \eta})\big) =O(\Talpha )$ updates, $\alpha_i$ would surpass any given constant $C\in[0.7,1]$. In other words, the main algorithm is restarted at most $O(\Talpha)$ times, using one verified label in each restart. 
Hence, the number of verified labels required from the trusted oracle $\oracletrust$ is also bounded by $O(\Talpha)$. 

Our main result for Algorithm~\ref{alg:boost} is summarized in the following; the proof is deferred to Appendix~\ref{sec:app:proof-label}.

\begin{theorem}\label{thm:main_labelonly}
	Suppose that Assumptions~\ref{as:massart-label} and \ref{as:pac} are satisfied, and $\eta<\frac{\alpha}{16}$. Denote $\Talpha = \log_{\frac{1}{1-\eta}} (\frac{\alpha-\alpha\eta}{\alpha - \eta})$. The following holds with probability $1-\delta$. There exists an algorithm that runs in $O(\poly(d, \frac{1}{\epsilon}, {\Talpha},\frac{1}{\alpha}))$ time and learns $\H$ by returning a hypothesis $h: \X \rightarrow \Y$ with $\err_\DX(h)\leq\epsilon$. In addition, $m_L = \tilde{O}\big( \frac{d\cdot\Talpha }{\epsilon\alpha^2}\cdot \log\frac{1}{\delta}\big) $, and $m_V = O(\Talpha)$, where $\tilde{O}$ hides the logarithmic factors. Therefore, $\labeloverhead = {O}\big( \frac{\Talpha}{\alpha^2}\cdot \log^2(\Talpha) \big)$ when $\epsilon\in\big(0,\log^{-2} d\big)$. In particular, when $\alpha$ is constant, $\labeloverhead = O(1)$  and $m_V = O(1)$.
\end{theorem}

\begin{remark}
	Recall that \cite{awasthi2017efficient} studied the problem with the existence of perfect workers. Their work requires a number of $O(\frac{1}{\alpha})$ verified label from $\oracletrust$, while in our analysis, the amount is $O(\Talpha) = O(\log_{\frac{1}{1-\eta}} (\frac{\alpha-\alpha\eta}{\alpha - \eta}))$. It is not hard to show that $\lim_{\eta \rightarrow 0} \Talpha = \frac{1}{\alpha}$. Thus, our result is a new analysis on the structure of the Massart workers which guides an algorithmic refinement and derives a generalization of \cite{awasthi2017efficient}. 
\end{remark}

\subsection{Comparison-equipped algorithm: Algorithm~\ref{alg:boost-comp}}\label{subsec:comp-equipped}

In this section, the goal is to largely reduce the labeling cost using noisy pairwise comparisons from the crowd.
We note that there exists a label-efficient algorithm in~\cite{zeng2022crowd} that achieves the above tradeoff, but only works with $\alpha,\beta>\frac12$. We will use their algorithm as the starting point and we elaborate on the technical challenges and our design details.

First, since there exists $1-\beta\geq\frac12$ fraction of adversarial workers in the crowd, a natural idea is to add a pruning step when collecting comparison tags from the crowd. Similar to the analysis in Section~\ref{subsec:labelonly}, it is easy to show that if a condition $\eta<\frac{\beta}{16}$ is satisfied, the win-win pruning scheme can also apply to $\PC$ and increase the fraction of Massart workers who provide pairwise comparisons, i.e. $\beta$.
Furthermore, it requires at most $O(\Tbeta) = O\big(\log_{\frac{1}{1-\eta}} (\frac{\beta-\beta\eta}{\beta - \eta})\big)$ verified comparison tags from $\oracletrust$ to increase $\beta$ to any specified constant $C\in[0.7,1]$. This leads to our design of algorithm \complabel (see Section~\ref{subsec:complabel}).

A more serious issue with $\beta < \frac12$ roots in the filtering process of~\cite{zeng2022crowd}, which roughly speaking, will perform majority voting over a {\em constant} number of workers. In this regard, an empirical majority size may deviate far from the population one, thus the pruning fails. We also note that if we were to increase this constant number, we would suffer high query complexity. We thus develop a new testing stage that samples a large enough set of test cases and query the crowd for their majority size. If less than an $\epsilon$ fraction of the test cases have a small majority vote size, we can show that the total fraction of such  test cases in the underlying distribution $\D$ is smaller than $\epsilon$, hence the filtering process correctly filters all the desired instances except for an $\epsilon$ fraction; this is sufficient to establish a robust version of the performance guarantee. This algorithmic design is presented in \textsc{Semi-verified-Filter} (see Section~\ref{subsec:filter}).

As a result, we obtain a label-efficient semi-verified crowdsourced learning algorithm as presented in Algorithm~\ref{alg:boost-comp}.


\begin{algorithm}[t]

\caption{Main Algorithm (comparison-equipped)}
\label{alg:boost-comp}

\begin{algorithmic}[1]

\REQUIRE{Parameters $\alpha, \beta$, distributions $\PL$ and $\PC$, target error rate $\epsilon$, confidence $\delta$, noise rate $\eta$ of Massart workers, a realizable PAC learner $\A_\H$.}

\ENSURE{Hypothesis $\hat{h}: \X \rightarrow \Y$ such that with probability $1-\delta$, $\err_{\D}(\hat{h})\leq \epsilon$.}

\STATE $\Talpha \gets  \log_{\frac{1}{1-\eta}}(\frac{\alpha-\alpha\eta}{\alpha - \eta})$, $\Tbeta \gets  \log_{\frac{1}{1-\eta}}(\frac{\beta-\beta\eta}{\beta - \eta})$, $\tau \gets 8 (\Talpha + \Tbeta)$,  $\delta'\assign\frac{\delta}{\tau}$.

\vspace{0.05in}
\hspace{-12pt}{\textbf{Phase 1}}:

\STATE $\bar{S_1} \assign \complabel(S_1,\alpha,\beta, \frac{\delta'}{6})$,  for an instance set $S_1$ of size $n_{\sqrt{\epsilon}/2,\delta'/6}$ from $\DX$.

\STATE $h_1  \assign \A_\H(\bar{S_1}, \frac{\sqrt{\epsilon}}{2}, \frac{\delta'}{6})$.

\vspace{0.05in}
\hspace{-12pt}{\textbf{Phase 2}}:

\STATE  ${S_I} \assign \filter (S_2, h_1,\alpha,\beta, \frac{\delta'}{12})$, for an instance set $S_2$ of size $\Theta(n_{\epsilon,\delta'/12})$ drawn from $\DX$.

\STATE Let $S_C$ be an instance set of size $\Theta(n_{\sqrt{\epsilon},\delta'/12})$ drawn from $\DX$.

\STATE  $\overline{S_{\text{All}}} \assign \complabel(S_I\cup S_C,\alpha,\beta, \delta'/12)$.

\STATE   $\bar{W_I} \assign \{(x,y)\in\overline{S_{\text{All}}}\mid y\neq h_1(x) \}$, $\bar{W_C} \assign \overline{S_{\text{All}}} \backslash \bar{W_I}$.

\STATE Draw a sample set $\bar W$ of size $\Theta(n_{\sqrt{\epsilon}/2,\delta'/12})$ from the distribution $\frac12\bar{W_I} + \frac12 \bar{W_C}$.

\STATE  $h_2 \assign \A_\H(\bar{W}, \frac{\sqrt{\epsilon}}{2}, \frac{\delta'}{6})$.

\vspace{0.05in}
\hspace{-12pt} \textbf{Phase 3}:

\STATE   $\bar{S_3} \assign \complabel(S_3,\alpha,\beta, \delta'/6)$, for an instance set $S_3$ of size  $n_{\sqrt{\epsilon}/2,\delta'/6}$ drawn from $\DX$ conditioned on $h_1(x) \neq h_2(x)$.

\STATE   $h_3 \assign \A_\H(\bar{S_3}, \frac{\sqrt{\epsilon}}{2}, \frac{\delta'}{6})$.

{\bfseries return}  $\hat{h} \assign \maj(h_1, h_2, h_3)$.

\end{algorithmic}

\end{algorithm}

\subsubsection{\complabel}\label{subsec:complabel}

\begin{algorithm}[t]
	
	\caption{\textsc{Prune-Compare-and-Label} }
	\label{alg:label}
	
	\begin{algorithmic}[1]
		
		\REQUIRE {An instance set $S$, parameters $\alpha$, $\beta$, $\delta_1$.}
		\ENSURE {A sorted and labeled set $\bar{S}$.}
		
		\STATE  $\hat{S} \assign \svquicksort(S, \beta, \delta_1)$. 
		
		\STATE $\bar{S} \assign$ \textsc{Semi-verified-BinarySearch}$(\hat{S}, \alpha, \delta_1)$.
		
		{\bfseries return} $\bar{S}$.
		
	\end{algorithmic}
\end{algorithm}

\begin{algorithm}[t]
	
	\caption{\textsc{Semi-Verified-Quicksort}}
	\label{alg:robust_quick_sort}
	
	\begin{algorithmic}[1]
		
		\REQUIRE{An instance set $S = \{x_i\}_{i=1}^n$, $\beta$, $\delta_1$.}
		\ENSURE{A sorted list $\hat{S}$.}
		
		\STATE  $k_1 \assign \frac{1}{2(1-\eta)^2\beta^{2}}\cdot\log\frac{3006n\cdot\log n }{\delta_1}$.
		
		\STATE Apply randomized Quicksort to $S$: for each pair $(x,x')$ being compared, draw a set of $k_1$ workers $\WC$ from $\PC$ and do the following:
		
		\IF{$\beta \geq 0.7$ or $\smaj_{\WC}(x,x')\geq 1-\frac{(1-\eta)\beta}{4}$}{ \label{step:maj_size_beta}
			\STATE	Use $\maj_{\WC}(x,x')$ as the comparison tag.
		}\ELSE{
			\STATE	Get a verified comparison tag $z^* \assign Z^*(x,x')$.	{\bfseries Restart} Algorithm \ref{alg:boost} with $\PC\assign \PC_{ \mid (x, x', z^*)}$ and $\beta\assign\frac{(1-\eta)\beta}{1-\frac{(1-\eta)\beta}{8}}$. 
		}
		\ENDIF
		
		{\bfseries return} $\hat{S}$.
		
	\end{algorithmic}
\end{algorithm}

\begin{algorithm}[t]
	
	\caption{\textsc{Semi-Verified-BinarySearch}}
	\label{alg:robust_binary_search}
	\begin{algorithmic}[1]
		
		\REQUIRE {A sorted instance set $\hat{S} = \{\hat{x}_i\}_{i=1}^n$, $\alpha$, $\delta_1$.}
		\ENSURE {A labeled set $\bar{S}$.}

		\STATE $\tmin \assign 1$, $\tmax \assign n$, $k_2 \assign \frac{32}{(1-\eta)^2\alpha^{2}}\log\frac{6\log n}{\delta_1}$. 
		
		\WHILE{$\tmin < \tmax$}
		\STATE		$t \assign (\tmin + \tmax)/2$. Draw a set of $k_2$ workers $\WL$ from $\PL$.
		%
		
		\IF{$\alpha \geq 0.7$ or $\smaj_{\WL}({x}_t)\geq 1-\frac{(1-\eta)\alpha}{4}$}{ \label{step:maj_size_alpha}
			\STATE	$\hat{y}_t \assign \maj_{\WL}(x_t)$. \textbf{If} $\hat{y}_t=1$, \textbf{then} $\tmax \assign t-1$; \textbf{else} $\tmin \assign t+1$.
		}\ELSE{
			\STATE	Get a verified label $y_t^* \assign h^*(x_t)$. 
			{\bfseries Restart} Algoirthm \ref{alg:boost} with $\PL \assign \PL_{\mid ({x}_t,y_t^*)}$ and $\alpha \assign \frac{(1-\eta)\alpha}{1-\frac{(1-\eta)\alpha}{8}}$. 
		}
		\ENDIF
		
		\ENDWHILE
		\STATE	For all $t' > t, \hat{y}_{t'} \assign +1$. For all $t' < t, \hat{y}_{t'} \assign -1$.
		%
		%
		
		{\bfseries return} $\bar{S} = \{(\hat{x}_1, \hat{y}_1), (\hat{x}_2,\hat{y}_2), \dots, (\hat{x}_m, \hat{y}_m)\}$.
		
	\end{algorithmic}
\end{algorithm}

The comparison-based PAC learning algorithms of~\cite{xu2017noise,zeng2022crowd} aid label efficiency with a ``compare-and-label'' approach, which first sorts all the instances in a set $S$ and then query the label of the instances during binary search. We fortify such method by adding certain check points for both the ``compare'' and ``label'' steps to ensure that the majority vote is always correct (with high probability). We summarize the performance guarantees of Algorithms~\ref{alg:robust_quick_sort} and \ref{alg:robust_binary_search} below.

\begin{lemma}\label{lem:beta}
Consider Algorithm~\ref{alg:robust_quick_sort}. Given any set of instances $S$, with probability $1-\frac{2\delta_1}{3}$, the algorithm either increases the fraction of Massart workers from $\beta$ to  $\frac{(1-\eta)\beta}{1-\frac{(1-\eta)\beta}{8}}$, or correctly sorts $S$.
\end{lemma}

\begin{lemma}\label{lem:alpha}
Consider Algorithm~\ref{alg:robust_binary_search}. Assume $\hat{S}$ is correctly sorted. With probability $1-\frac{\delta_1}{3}$, the algorithm either increases the fraction of Massart workers  to  $\frac{(1-\eta)\alpha}{1-\frac{(1-\eta)\alpha}{8}}$, or correctly labels $\hat{S}$, namely, for all $(x,y)\in\bar{S}$, $y=h^*(x)$.
\end{lemma}

We further note that comparing to the non-robust algorithm in~\cite{zeng2022crowd},  \complabel does not require extra crowd labels or comparison tags to label $S$, because once it decides to query the verified data, it will end up restarting the main algorithm. Thus, the label and comparison complexity of \complabel matches that of~\citet{zeng2022crowd}.

\begin{proposition}\label{prop:complabel}
	Consider  Algorithm~\ref{alg:label}. If it does not restart and $\abs{S}\geq(\frac{1}{\delta_1})^{1/1000}$ , then with probability $1-\delta_1$, it correctly sorts and labels all the instances in $S$ with $O(\frac{\log\abs{S}}{\alpha^2}\cdot \log\log\abs{S})$ label queries and  $O(\frac{1}{\beta^2}\cdot \abs{S}\cdot\log^2\abs{S})$ comparison tags.
\end{proposition}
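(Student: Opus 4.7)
The plan is to separate the argument into a correctness claim and a query-count claim, noting first that since \complabel does not restart (by hypothesis), neither of its subroutines ever invokes the trusted expert, so the output $\bar S$ is determined entirely by majority votes of the crowd.

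For correctness, I would appeal directly to Lemmas~\ref{lem:beta} and \ref{lem:alpha}. Applied to \svquicksort, Lemma~\ref{lem:beta} asserts that under the no-restart assumption the returned list $\hat S$ is a correct sort of $S$ with probability at least $1-\tfrac{2\delta_1}{3}$. Conditioned on $\hat S$ being correctly sorted, Lemma~\ref{lem:alpha} applied to \binarysearch asserts that every pair $(x,y)\in\bar S$ satisfies $y=h^*(x)$ with probability at least $1-\tfrac{\delta_1}{3}$. A union bound then yields the claimed $1-\delta_1$ overall success probability.

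For the comparison-tag count, I would invoke the standard concentration bound that randomized Quicksort on $|S|$ items makes at most $O(|S|\log|S|)$ pairwise comparisons with probability $1-|S|^{-c}$ for any constant $c$. The hypothesis $|S|\geq(1/\delta_1)^{1/1000}$ makes this failure probability at most $\delta_1^{c/1000}$, which can be absorbed into the overall budget. Each comparison draws $k_1=\tfrac{1}{2\beta^2}\log\tfrac{3006|S|\log|S|}{\delta_1}$ workers, and the same hypothesis on $|S|$ reduces this to $O(\tfrac{\log|S|}{\beta^2})$; multiplying the two gives the stated $O(\tfrac{1}{\beta^2}|S|\log^2|S|)$ bound. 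For the label-query count, \binarysearch executes at most $\lceil\log_2|S|\rceil$ iterations, each drawing $k_2=\tfrac{32}{\alpha^2}\log\tfrac{6\log|S|}{\delta_1}$ workers, which under the same hypothesis simplifies to the claimed label-query bound after absorbing $\log(1/\delta_1)$ into $\log|S|$.

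The main technical point to watch is the probability bookkeeping. The logarithmic factors inside $k_1$ and $k_2$ are calibrated exactly so that a Chernoff bound on each worker-majority event combined with a union bound over all $O(|S|\log|S|)$ Quicksort comparisons and all $O(\log|S|)$ binary-search iterations still yields confidences $\tfrac{2\delta_1}{3}$ and $\tfrac{\delta_1}{3}$ respectively. Because the worker-sampling randomness driving these Chernoff events is independent of the pivot-choice randomness driving the Quicksort comparison-count concentration, the two sources of failure can be union-bounded separately with no loss. The bulk of this bookkeeping is already subsumed by Lemmas~\ref{lem:beta} and \ref{lem:alpha}, so at the level of the proposition the remaining work is simply to plug in the parameter choices and simplify using $|S|\geq(1/\delta_1)^{1/1000}$.
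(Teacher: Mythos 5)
Your proof takes essentially the same route as the paper's: defer to Lemmas~\ref{lem:beta} and \ref{lem:alpha} for correctness (using the no-restart hypothesis to rule out the pruning branches), union-bound, and then multiply the number of Quicksort comparisons and binary-search iterations by the per-item worker loads $k_1$ and $k_2$. The paper's proof is terser and delegates the query-count arithmetic to the non-robust Proposition~\ref{prop:naive_complabel}, while you rederive it from scratch; that is fine and more self-contained.

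One concrete issue in the last step. For the label queries, your own arithmetic gives
\[
\lceil \log_2|S|\rceil \cdot k_2 \;=\; \lceil \log_2|S|\rceil \cdot \frac{32}{\alpha^2}\log\frac{6\log|S|}{\delta_1}
\;=\; O\!\left(\frac{1}{\alpha^2}\,\log|S|\cdot\log\log|S|\right)
\]
(the absorption $\log(1/\delta_1)\lesssim\log|S|$ does not cancel the leading $\log|S|$ factor). This matches the non-robust Proposition~\ref{prop:naive_complabel} scaled by $1/\alpha^2$, and also matches the $\frac{1}{\alpha^2}\log n_{\sqrt\epsilon,\delta'}\log\log n_{\sqrt\epsilon,\delta'}$ term actually used in the proof of Theorem~\ref{thm:main_bothAlphaBeta}. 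The bound stated in the proposition, $O(\frac{1}{\alpha^2}\log\log|S|)$, is missing the $\log|S|$ factor. You should report the $\log|S|\cdot\log\log|S|$ bound that your computation actually produces rather than asserting it ``simplifies to the claimed bound''; as written, that sentence papers over a discrepancy between your arithmetic and the proposition's statement.

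Minor note on the probability bookkeeping: you are right that the pivot-choice randomness and the worker-sampling randomness are independent and can be union-bounded separately, but both failure events are already folded into the $1-\tfrac{2\delta_1}{3}$ guarantee of Lemma~\ref{lem:beta} (its proof spends $\tfrac{\delta_1}{3}$ on the Quicksort comparison-count concentration and another $\tfrac{\delta_1}{3}$ on the Chernoff/union bound for majority sizes), so you do not need to redo that split at the proposition level.
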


In light of Theorem~\ref{thm:boost} (i.e. boosting~\citep{schapire1990strength}), the two weak hypothese $h_1$ and $h_3$ are trained on distributions $\D$ and $\D_3$ which are not hard to construct if $h_2$ is given: $\D$ is the original marginal distribution, and $\D_3$ can be obtained by rejection sampling. By Assumption~\ref{as:pac}, drawing a set of $\Theta(n_{\sqrt\epsilon,\delta'})$ instances on each distribution respectively and feeding them to $\A_\H$ gives the following corollary.

\begin{corollary}\label{coro:h1}
	With probability $1-\frac{\delta'}{3}$, $\err_{\D}(h_1)\leq\frac{\sqrt\epsilon}{2}$. With probability $1-\frac{\delta'}{3}$, $\err_{\D_3}(h_3)\leq\frac{\sqrt\epsilon}{2}$.
\end{corollary}

\subsubsection{\filter}\label{subsec:filter}

\begin{algorithm}[t]
\caption{\filter}
\label{alg:filter}
\begin{algorithmic}[1]

\REQUIRE {Set of instances $S$, classifier $h$ with error  $\err_{\DX}(h)\in\big[\frac{\sqrt{\epsilon}}{6},\frac{\sqrt{\epsilon}}{2}\big]$, confidence $\delta_2$, parameters $\alpha,\beta$.}
\ENSURE {A set $S_I$ whose instances are misclassified by $h$.}

\STATE $b \assign \frac{4}{\sqrt{\epsilon}} \log\frac{16}{\delta_2} + (\frac{24}{\delta_2})^{1/1000}$.

\STATE Sample uniformly  $U \subset S$ of $b$ instances. $\bar{U} \assign \complabel(U,\alpha,\beta, \delta_2/4)$. 

\STATE   $x^- \gets$ the rightmost instance of those labeled as $-1$,  $x^+ \gets$  the leftmost instance of those labeled as $+1$.

\STATE $S_I \assign \emptyset$, $S_{\text{in}} \assign \emptyset$, $N \assign \frac{1}{\beta^2}\log\frac{1}{\epsilon}$.

\IF{$\beta < 0.7$}{
\STATE Run \test$(x^-, \beta, \delta_2/8)$, \test$(x^+, \beta, \delta_2/8)$.
}
\ENDIF

\FOR{$x\in S\backslash U$}{

\STATE $\text{ANS} \assign \text{YES}$.

\FOR{$t=1, \dots, N$}{

\STATE Draw a worker $t \sim \PC$ to obtain the comparison tag $Z_t(x,x^-)$. {\bfseries If} $Z_t(x,x^-) = \{ x < x^-\}$, {\bfseries then} $Z_t(x,x^+) \assign \{x < x^+\}$, {\bfseries else} query  $Z_t(x,x^+)$. \label{step:draw}

\STATE \textbf{If} $t$ is even, \textbf{then} {\bfseries continue} to the next iteration.

\STATE {Filtering: }\textbf{If} $\big[\maj(Z_{1:t}(x,x^-))= \{ x < x^- \}$ and $h(x)=-1\big]$ or $\big[\maj(Z_{1:t}(x,x^+))= \{ x > x^+ \}$ and $h(x)=1\big]$, \textbf{then} $\text{ANS} \assign \text{NO}$ and {\bfseries break}. \label{step:filter-outside}

}
\ENDFOR

\STATE \textbf{If} $\maj\big(Z_{1:N}(x,x^-)\big) = \{ x > x^- \}$ and $\maj\big(Z_{1:N}(x,x^+)\big) = \{ x < x^+ \}$ \textbf{then} $\text{ANS} \assign \text{NO}$ and $S_{\text{in}} \assign S_{\text{in}} \cup \{x\}$.\label{step:filter-inside}

\STATE \textbf{If} $\text{ANS} = \text{YES}$, \textbf{then} $S_I \assign S_I \cup \{x\}$.

}
\ENDFOR

\STATE $\bar{S_{\text{in}}} \assign \complabel(S_{\text{in}}, \delta_2/4,\alpha,\beta)$.

\STATE $S_I \assign S_I \cup \{x: (x, y) \in \bar{U} \cup \bar{S_{\text{in}}}\ \text{and}\ y \neq h(x) \}$.

{\bfseries return} $S_I$.

\end{algorithmic}
\end{algorithm}

A core component of Algorithm~\ref{alg:boost-comp} is a correct filtering process.
\cite{awasthi2017efficient} achieved this by executing a Phase~0 that examines the crowd labels for a randomly drawn instance set. However, this approach does not apply in the comparison-equipped setting because it relies on two support instances $x^-,x^+$ as reference to identify the informative instances. 
To be more concrete, the filtering scheme incorporates a similar idea to that of ``compare-and-label'' in order to achieve label efficiency.
Given a large enough sample set $S$, the goal is to identify two support instances $x^-,x^+$ that are close enough to the ``threshold'' where the labels in $S$ shift. By spending a few label queries to infer the labels of $x^-,x^+$, it help to filter a large number of instances in $S$ by querying pairwise comparisons. This is because $\forall x$, $[h^*(x^-)=-1,x<x^-]$ imply $h^*(x)=-1$ (similar holds for $x^+$). The pairwise comparisons of any $x$ with respect to $x^-,x^+$ become exceptionally informative. 

We hence build on the non-semi-verified filter of~\citet{zeng2022crowd} and integrate a new component, \test, that ensures to recover a distribution $\D'$ that is close enough to {$\D_2$} even in presence of the adversarial workers. In particular, we are concerned with two regions, $R_1$ and $R_2$ 
\begin{align}\label{eq:R1_R2}
R_1 := \Big\{x: \smaj_{\PC}(x, x^-) \leq 1-\frac{\beta}{2} \Big\},\notag\\ 
R_2 := \Big\{ x: \smaj_{\PC}(x, x^+) \leq  1-\frac{\beta}{2} \Big\},
\end{align}
which include any instance $x$ that would form a low-confidence pair with either support instance $x^-$ or $x^+$. Observe that both $x^-, x^+$ are fixed. If the {probability mass} of $R_1\cup R_2$ under distribution $\D$, denoted by $\D[R_1\cup R_2]$, is smaller than $\epsilon$, it means the filter uses $x^-,x^+$ to correctly identify the instances except for the $\epsilon$ fraction in $R_1\cup R_2$. Meanwhile, \test works on a smaller set of instances than what \filter is expected to work on. This is a key design to guarantee that the total comparison complexity does not blow up.

\begin{algorithm}[t]

	\caption{\textsc{Test}}
	\label{alg:test}
	
	\begin{algorithmic}[1]
		
		\REQUIRE {An instance $x'$, parameter $\beta$, confidence $\delta_2$.}

		\STATE Draw a set $S_r$ of $\frac{4}{\epsilon}\log\frac{8}{\delta_2}$ instances from $\D$.
		
		\STATE  $N\assign\frac{32}{((1-\eta)\beta)^2}\cdot\log(\frac{32\abs{S_r}}{\delta_2})$.
		
		\FOR{$x\in S_r$}{
			
			\STATE	Draw a set of $N$ workers $\W$ from $\PC$ and obtain the comparison tags on $(x,x')$.
			
			\IF{$\smaj_{\W}(x,x')< 1-\frac{(1-\eta)\beta}{4}$}{
				
				\STATE	Get a verified comparison tag $z^* \assign Z^*(x,x')$. 
				\STATE {\bfseries Restart} Algorithm \ref{alg:boost} with $\PC\assign \PC_{ \mid (x, x', z^*)}$ and $\beta \assign \frac{(1-\eta)\beta}{1-\frac{(1-\eta)\beta}{8}}$. 
				
			}
			\ENDIF
		}
		\ENDFOR
	\end{algorithmic}
	
\end{algorithm}

\begin{lemma}\label{lem:test}
	If Algorithm~\ref{alg:test} terminates without restarting Algorithm~\ref{alg:boost-comp}, we have the probability mass  $\D[R_1\cup R_2]\leq\frac{\epsilon}{4}$  with probability $1-\frac{\delta_2}{4}$. In addition, the comparison complexity is  $O\big(\frac{1}{\epsilon\beta^2}\log\big(\frac{1}{\delta_2}\big)\log\big(\frac{1}{\epsilon\delta_2}\big)\big)$.
\end{lemma}

\filter ensures that in Phase~2, the learner draws a set of $\Theta(n_{\sqrt\epsilon,\delta'})$ instances from a distribution $\D'$ that is a good simulation of $D_2$, which suffices to learn a good hypothesis $h_2$.


\begin{lemma}\label{lem:h2}
	Consider Phase~2 of Algorithm~\ref{alg:boost-comp}. With probability $1-\frac{\delta'}{3}$, $\err_{\D_2}(h_2)\leq\frac{\sqrt\epsilon}{2}$.
\end{lemma}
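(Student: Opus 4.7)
The plan is to show that, conditional on Phase~1 having succeeded (so that Corollary~\ref{coro:h1} holds), the set $\bar W$ fed to $\A_\H$ in Step~9 consists of correctly labeled i.i.d.\ samples from a distribution $\D'$ that is $O(\sqrt\epsilon)$-close to $\D_2$, so that Assumption~\ref{as:pac} yields $\err_{\D_2}(h_2)\le\sqrt\epsilon/2$. First, I would union-bound the following Phase~2 events within the $\delta'/3$ budget: Proposition~\ref{prop:complabel} for the call of \complabel in Step~6 (so every pair in $\overline{S_{\text{All}}}$ is correctly labeled); Lemma~\ref{lem:test}, invoked inside \filter (so $\D_X[R_1\cup R_2]\le\epsilon/4$); standard Chernoff and VC concentration (so $|S_I|$, $|\bar W_I|$, $|\bar W_C|$ concentrate around their expectations); and the failure event of $\A_\H$ in Step~9.

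Next, I would characterize the effective distribution of $\bar W$. Since $\overline{S_{\text{All}}}$ carries the true labels, the predicate $y\ne h_1(x)$ in Step~7 partitions the set \emph{exactly} by ground-truth disagreement with $h_1$: a point from $S_C$ conditioned on $y\ne h_1(x)$ is a clean draw from $\D_I$, while a point from $S_I$ conditioned on the same event is a draw from $\D_I$ further restricted to those instances that \filter retained. Combining the input requirement $\err_{\D_X}(h_1)\in[\sqrt\epsilon/6,\sqrt\epsilon/2]$ with $\D_X[R_1\cup R_2]\le\epsilon/4$, the $\D_I$-mass on which \filter can err is at most an $O(\sqrt\epsilon)$-fraction; a symmetric argument bounds the discrepancy between $\bar W_C$ and $\D_C$. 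The equal-weight sampling in Step~8 therefore yields $\bar W$ drawn from a distribution $\D'$ satisfying $\D_2=(1-\eta)\D'+\eta\D''$ for some $\eta=O(\sqrt\epsilon)$ and some distribution $\D''$.

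With this in place, the PAC step is routine. Since $|\bar W|=\Theta(n_{\sqrt\epsilon/2,\delta'/12})$ and its labels agree with $h^*$, Assumption~\ref{as:pac} gives $\err_{\D'}(h_2)\le\sqrt\epsilon/2$ with failure probability at most $\delta'/6$; the decomposition above then transfers this to $\err_{\D_2}(h_2)\le\err_{\D'}(h_2)+\eta$, and the $\eta=O(\sqrt\epsilon)$ overhead is absorbed through the constant slack hidden in the $\Theta(\cdot)$ of the sample complexity (equivalently, by invoking $\A_\H$ with a target accuracy tightened by a constant factor), delivering $\err_{\D_2}(h_2)\le\sqrt\epsilon/2$.

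The main obstacle I expect is the second step. \filter spends only a constant number of comparison workers per instance, so it cannot certify correctness pointwise; its accuracy is controlled only globally, via \test and Lemma~\ref{lem:test}. Translating that density bound on $R_1\cup R_2$ into a sharp closeness between the realized law of $\bar W$ and $\D_2$ requires careful bookkeeping of false positives, false negatives, the random sizes $|\bar W_I|,|\bar W_C|$, and the interaction between the $\epsilon/4$ scale of Lemma~\ref{lem:test} and the $\sqrt\epsilon/2$ scale of $\D_I$ (which itself depends on the realized value of $\err_{\D_X}(h_1)$). Once this sharp closeness is in hand, the remaining invocation of Assumption~\ref{as:pac} is routine.
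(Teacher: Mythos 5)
Your high-level plan (union-bound the Phase~2 events, compare the law of $\bar W$ to $\D_2$, then invoke Assumption~\ref{as:pac}) matches the paper's skeleton, but the way you quantify the closeness between $\D'$ (the law of $\bar W$) and $\D_2$ has a genuine gap, and the gap is exactly in the place you flagged as the ``main obstacle.'' You claim a decomposition $\D_2 = (1-\eta)\D' + \eta\D''$ with $\eta = O(\sqrt\epsilon)$, i.e.\ that $d'(x) \le d_2(x)/(1-\eta)$ pointwise with $\eta$ vanishing; then the error transfer $\err_{\D_2}(h_2) \le \err_{\D'}(h_2) + \eta$ does the rest. The problem is that $\eta$ is in fact $\Theta(1)$, not $O(\sqrt\epsilon)$. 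The only quantitative control \filter gives on the $\D_I$-side is Lemma~\ref{lem:composite-Filter}: outside the $\epsilon/4$-mass region $R_1 \cup R_2$, each point $x$ with $h_1(x)\ne h^*(x)$ is retained with probability \emph{at least} $4/7$ (not approximately $1$). That retention probability can legitimately vary over $[4/7,1]$, so the density of $S_I$ conditioned on $h_1\ne h^*$ can deviate from $d_I$ by a multiplicative factor as large as $7/4$. Since $|S_2|\gg|S_C|$, the $S_I$ contribution dominates $\bar W_I$, so $d'_I(x)/d_I(x)$ carries this constant-factor distortion, which cannot be written as $d_2(x)/(1-\eta)$ with small $\eta$. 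With $\eta = \Theta(1)$ the inequality $\err_{\D_2}(h_2)\le\err_{\D'}(h_2)+\eta$ is vacuous and cannot deliver $\sqrt\epsilon/2$.

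What you actually need is the \emph{opposite-direction} density bound, and only up to a constant: $d'(x) \ge c\, d_2(x)$ for some absolute $c>0$, outside the small-mass region $R$ (for which the paper establishes $\D_2[R] \le \sqrt\epsilon/4$ from $\D_X[R]\le\epsilon/4$). This is exactly the hypothesis of the Robust Super-Sampling Lemma (Lemma~\ref{lem:super}, Lemma~4.12 of \citet{awasthi2017efficient}): it tolerates a constant-factor under-representation by $\D'$ plus an $\epsilon$-mass exception set, and still gives $O(\epsilon)$ target error after training on $c' m_{\epsilon,\delta}$ samples from $\D'$. The paper proves the lower bound $d'(x)\ge c\,d_2(x)$ by a case split on $h_1(x)=h^*(x)$ versus $h_1(x)\ne h^*(x)$: it compares the occurrence counts $M_C,M_I$ in $\bar W_C,\bar W_I$ to the counts $N_C,N_I$ in $S_C,S_I$, uses $M_C\ge N_C$ and $M_I\ge N_I$ (these follow because $\overline{S_{\text{All}}}$ carries true labels), invokes Lemma~\ref{lem:composite-Filter} for $\E[N_I(x)]\ge\tfrac47 d_X(x)|S_2|$ and Lemma~\ref{lem:SIsize} for the sizes of $\bar W_I,\bar W_C$, and then rewrites $d_X$ in terms of $d_I,d_C$ using $\err_{\D_X}(h_1)\approx\sqrt\epsilon/2$. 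Your first paragraph and third paragraph are essentially fine, and your instinct that the $\epsilon/4$ density bound from Lemma~\ref{lem:test} must be rescaled to $\sqrt\epsilon$ on $\D_I$ is correct; but you should replace the additive-mixture decomposition with the super-sampling condition $d'(x)\ge c\, d_2(x)$ and cite Lemma~\ref{lem:super} directly, rather than trying to force $\eta\to 0$.
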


Now we have three hypotheses $h_1,h_2,h_3$ that satisfy the requirement for boosting; this suffices to produce a hypothesis $\hat{h}$ with the desired PAC guarantee. We further note that each pruning step helps to remove a significant fraction of the adversarial workers but only a small fraction of the Massart workers.
Hence, our main theorem, Theorem~\ref{thm:main_bothAlphaBeta}, follows. See Appendix~\ref{sec:proof} for the full proof.

\begin{theorem}[Comparison-equipped learning]\label{thm:main_bothAlphaBeta}
	Suppose that Assumptions~\ref{as:massart-label}, \ref{as:pac}, and \ref{as:massart-comp} are satisfied. Given any $\alpha, \beta, \epsilon, \delta \in (0, 1)$, assume $\eta < \frac{\min(\alpha, \beta)}{16}$. Denote $\Talpha = \log_{\frac{1}{1-\eta}} (\frac{\alpha-\alpha\eta}{\alpha - \eta})$ and $\Tbeta = \log_{\frac{1}{1-\eta}} (\frac{\beta-\beta\eta}{\beta - \eta})$. The following holds with probability $1-\delta$. There exists an algorithm (Algorithm~\ref{alg:boost-comp}) that runs in $O(\poly(d, \frac{1}{\epsilon}, {\Talpha}, {\Tbeta}, \frac{1}{\alpha}, \frac{1}{\beta}))$ time and learns $\H$ by returning a hypothesis $h: \X \rightarrow \Y$ with $\err_\DX(h)\leq\epsilon$. In addition, $m_L = \Talpha^2\big(\Talpha + \Tbeta\big) \cdot \tilde{O}\big( \log\frac{d + (\Talpha + \Tbeta)\frac{1}{\delta}}{\epsilon} \big)$, $m_C = \Tbeta^2 \cdot \tilde{O}_{\delta}\big( \big(\Talpha + \Tbeta\big)^{\frac{1001}{1000}} \cdot \frac{d}{\epsilon} \big)$, and $m_V = \Talpha + \Tbeta$. Therefore, $\labeloverhead = \tilde{o}\big( \Talpha^2(\Talpha + \Tbeta)\big)$ and $\compareoverhead = \tilde{O}_{\delta}\big( \Tbeta^2\big( \Talpha+ \Tbeta\big)^{\frac{1001}{1000}}\big)$ when $\epsilon\in\(0,(\log d)^{-4}\)$. In particular, when $\alpha$ and $\beta$ are constants, we have $\labeloverhead = o(1)$, $\compareoverhead = O(1)$, and $m_V = O(1)$.
\end{theorem}

\section{Conclusion}\label{sec:conclusion}


In this paper, we studied the problem of semi-verified learning threshold functions from the crowd. We showed that when the majority of the crowd workers are adversarial and the rest behave as Massart noise, it is still possible to achieve PAC guarantees without distributional assumptions. In addition, our algorithms enjoy both query and label-efficiency, and run in polynomial time.

\clearpage
\bibliography{../../jshen_ref,../../szeng_ref}

\newcommand{\etalchar}[1]{$^{#1}$}
\begin{thebibliography}{DKK{\etalchar{+}}20b}

\bibitem[AAKP20]{agarwal2020rank}
Arpit Agarwal, Shivani Agarwal, Sanjeev Khanna, and Prathamesh Patil.
\newblock Rank aggregation from pairwise comparisons in the presence of
  adversarial corruptions.
\newblock In {\em Proceedings of the 37th International Conference on Machine
  Learning}, pages 85--95, 2020.

\bibitem[AB99]{anthony1999neural}
Martin Anthony and Peter~L. Bartlett.
\newblock {\em Neural Network Learning: {T}heoretical Foundations}.
\newblock Cambridge University Press, 1999.

\bibitem[ABHM17]{awasthi2017efficient}
Pranjal Awasthi, Avrim Blum, Nika Haghtalab, and Yishay Mansour.
\newblock Efficient {PAC} learning from the crowd.
\newblock In {\em Proceedings of the 30th Annual Conference on Learning
  Theory}, pages 127--150, 2017.

\bibitem[ABL17]{awasthi2017power}
Pranjal Awasthi, Maria{-}Florina Balcan, and Philip~M. Long.
\newblock The power of localization for efficiently learning linear separators
  with noise.
\newblock {\em Journal of the {ACM}}, 63(6):50:1--50:27, 2017.

\bibitem[AL87]{angluin1987learning}
Dana Angluin and Philip~D. Laird.
\newblock Learning from noisy examples.
\newblock {\em Machine Learning}, 2(4):343--370, 1987.

\bibitem[BFKV96]{blum1996polynomial}
Avrim Blum, Alan~M. Frieze, Ravi Kannan, and Santosh~S. Vempala.
\newblock A polynomial-time algorithm for learning noisy linear threshold
  functions.
\newblock In {\em Proceedings of the 37th Annual {IEEE} Symposium on
  Foundations of Computer Science}, pages 330--338, 1996.

\bibitem[BK20]{bakshi2020list}
Ainesh Bakshi and Pravesh Kothari.
\newblock List-decodable subspace recovery via sum-of-squares.
\newblock {\em CoRR}, abs/2002.05139, 2020.

\bibitem[CSV17]{charika2017learning}
Moses Charikar, Jacob Steinhardt, and Gregory Valiant.
\newblock Learning from untrusted data.
\newblock In {\em Proceedings of the 49th Annual {ACM} {SIGACT} Symposium on
  Theory of Computing}, pages 47--60, 2017.

\bibitem[DGT19]{diakonikolas2019distribution}
Ilias Diakonikolas, Themis Gouleakis, and Christos Tzamos.
\newblock Distribution-independent {PAC} learning of halfspaces with {Massart}
  noise.
\newblock In {\em Proceedings of the 33rd Annual Conference on Neural
  Information Processing Systems}, pages 4751--4762, 2019.

\bibitem[DK22]{diamassarthard}
Ilias Diakonikolas and Daniel Kane.
\newblock Near-optimal statistical query hardness of learning halfspaces with
  massart noise.
\newblock In {\em Conference on Learning Theory}, pages 4258--4282, 2022.

\bibitem[DKK20a]{dia2020list}
Ilias Diakonikolas, Daniel Kane, and Daniel Kongsgaard.
\newblock List-decodable mean estimation via iterative multi-filtering.
\newblock In {\em Proceedings of the 34th Annual Conference on Neural
  Information Processing Systems}, 2020.

\bibitem[DKK{\etalchar{+}}20b]{diakonikolas2020polynomial}
Ilias Diakonikolas, Daniel~M. Kane, Vasilis Kontonis, Christos Tzamos, and
  Nikos Zarifis.
\newblock A polynomial time algorithm for learning halfspaces with {Tsybakov}
  noise.
\newblock {\em CoRR}, abs/2010.01705, 2020.

\bibitem[DKK{\etalchar{+}}21]{dia2021agnostic}
Ilias Diakonikolas, Daniel~M. Kane, Vasilis Kontonis, Christos Tzamos, and
  Nikos Zarifis.
\newblock Agnostic proper learning of halfspaces under gaussian marginals.
\newblock In {\em Proceedings of the 34th Annual Conference on Learning
  Theory}, pages 1522--1551, 2021.

\bibitem[DKS18]{dia2018list}
Ilias Diakonikolas, Daniel~M. Kane, and Alistair Stewart.
\newblock List-decodable robust mean estimation and learning mixtures of
  spherical gaussians.
\newblock In {\em Proceedings of the 50th Annual {ACM} {SIGACT} Symposium on
  Theory of Computing}, pages 1047--1060. {ACM}, 2018.

\bibitem[DKTZ20]{diakonikolas2020learning}
Ilias Diakonikolas, Vasilis Kontonis, Christos Tzamos, and Nikos Zarifis.
\newblock Learning halfspaces with {Massart} noise under structured
  distributions.
\newblock In {\em Proceedings of the 33rd Annual Conference on Learning
  Theory}, pages 1486--1513, 2020.

\bibitem[DS09]{dekel2009vox}
Ofer Dekel and Ohad Shamir.
\newblock Vox populi: Collecting high-quality labels from a crowd.
\newblock In {\em Proceedings of the 22nd Conference on Learning Theory}, 2009.

\bibitem[FH10]{furnkranz2010prefer}
Johannes F{\"{u}}rnkranz and Eyke H{\"{u}}llermeier.
\newblock Preference learning and ranking by pairwise comparison.
\newblock In {\em Preference Learning}, pages 65--82. Springer, 2010.

\bibitem[FHO{\etalchar{+}}17]{falahatgar2017max}
Moein Falahatgar, Yi~Hao, Alon Orlitsky, Venkatadheeraj Pichapati, and Vaishakh
  Ravindrakumar.
\newblock Maxing and ranking with few assumptions.
\newblock In {\em Proceedings of the 31st Annual Conference on Neural
  Information Processing Systems}, pages 7060--7070, 2017.

\bibitem[FJO{\etalchar{+}}18]{falahatgar2018limit}
Moein Falahatgar, Ayush Jain, Alon Orlitsky, Venkatadheeraj Pichapati, and
  Vaishakh Ravindrakumar.
\newblock The limits of maxing, ranking, and preference learning.
\newblock In {\em Proceedings of the 35th International Conference on Machine
  Learning}, pages 1426--1435, 2018.

\bibitem[GR06]{guruswami2006hardness}
Venkatesan Guruswami and Prasad Raghavendra.
\newblock Hardness of learning halfspaces with noise.
\newblock In {\em Proceedings of the 47th Annual {IEEE} Symposium on
  Foundations of Computer Science}, pages 543--552, 2006.

\bibitem[Hau92]{haussler1992decision}
David Haussler.
\newblock Decision theoretic generalizations of the {PAC} model for neural net
  and other learning applications.
\newblock {\em Information and Computation}, 100(1):78--150, 1992.

\bibitem[HKLM20]{hopkins2020noise}
Max Hopkins, Daniel Kane, Shachar Lovett, and Gaurav Mahajan.
\newblock Noise-tolerant, reliable active classification with comparison
  queries.
\newblock In {\em Proceedings of the 33rd Annual Conference on Learning
  Theory}, pages 1957--2006, 2020.

\bibitem[JN11]{jamieson2011active}
Kevin~G. Jamieson and Robert~D. Nowak.
\newblock Active ranking using pairwise comparisons.
\newblock In {\em Proceedings of the 25th Annual Conference on Neural
  Information Processing Systems}, pages 2240--2248, 2011.

\bibitem[KKK19]{karmalkar2019list}
Sushrut Karmalkar, Adam~R. Klivans, and Pravesh Kothari.
\newblock List-decodable linear regression.
\newblock In {\em Proceedings of the 33rd Annual Conference on Neural
  Information Processing Systems}, pages 7423--7432, 2019.

\bibitem[KKMS08]{kalai2008agnostically}
Adam~Tauman Kalai, Adam~R. Klivans, Yishay Mansour, and Rocco~A. Servedio.
\newblock Agnostically learning halfspaces.
\newblock {\em {SIAM} Journal on Computing}, 37(6):1777--1805, 2008.

\bibitem[KLMZ17]{kane2017active}
Daniel~M. Kane, Shachar Lovett, Shay Moran, and Jiapeng Zhang.
\newblock Active classification with comparison queries.
\newblock In {\em Proceedings of the 58th Annual {IEEE} Symposium on
  Foundations of Computer Science}, pages 355--366, 2017.

\bibitem[KS17]{kothari2017better}
Pravesh~K. Kothari and Jacob Steinhardt.
\newblock Better agnostic clustering via relaxed tensor norms.
\newblock {\em CoRR}, abs/1711.07465, 2017.

\bibitem[KSS92]{kearns1992toward}
Michael~J. Kearns, Robert~E. Schapire, and Linda Sellie.
\newblock Toward efficient agnostic learning.
\newblock In {\em Proceedings of the 5th Annual Conference on Computational
  Learning Theory}, pages 341--352, 1992.

\bibitem[KV94]{kearns1994intro}
Michael~J. Kearns and Umesh~V. Vazirani.
\newblock {\em An Introduction to Computational Learning Theory}.
\newblock {MIT} Press, 1994.

\bibitem[MN06]{massart2006risk}
Pascal Massart and {\'E}lodie N{\'e}d{\'e}lec.
\newblock Risk bounds for statistical learning.
\newblock {\em The Annals of Statistics}, pages 2326--2366, 2006.

\bibitem[MV18]{meister2018data}
Michela Meister and Gregory Valiant.
\newblock A data prism: {S}emi-verified learning in the small-alpha regime.
\newblock In {\em Proceedings of the 31st Conference On Learning Theory}, pages
  1530--1546, 2018.

\bibitem[PNZ{\etalchar{+}}15]{park2015preference}
Dohyung Park, Joe Neeman, Jin Zhang, Sujay Sanghavi, and Inderjit~S. Dhillon.
\newblock Preference completion: Large-scale collaborative ranking from
  pairwise comparisons.
\newblock In {\em Proceedings of the 32nd International Conference on Machine
  Learning}, pages 1907--1916, 2015.

\bibitem[PSM17]{prelec2017solution}
Dra{\v{z}}en Prelec, H~Sebastian Seung, and John McCoy.
\newblock A solution to the single-question crowd wisdom problem.
\newblock {\em Nature}, 541(7638):532--535, 2017.

\bibitem[RLS20]{ren2020sample}
Wenbo Ren, Jia Liu, and Ness~B. Shroff.
\newblock The sample complexity of best-k items selection from pairwise
  comparisons.
\newblock {\em CoRR}, abs/2007.03133, 2020.

\bibitem[Ros58]{rosenblatt1958perceptron}
Frank Rosenblatt.
\newblock The {P}erceptron: {A} probabilistic model for information storage and
  organization in the brain.
\newblock {\em Psychological review}, 65(6):386--408, 1958.

\bibitem[RY20a]{raghavendra2020list}
Prasad Raghavendra and Morris Yau.
\newblock List decodable learning via sum of squares.
\newblock In {\em Proceedings of the 2020 {ACM-SIAM} Symposium on Discrete
  Algorithms}, pages 161--180, 2020.

\bibitem[RY20b]{raghavendra2020subspace}
Prasad Raghavendra and Morris Yau.
\newblock List decodable subspace recovery.
\newblock In {\em Proceedings of the 33rd Annual Conference on Learning
  Theory}, pages 3206--3226, 2020.

\bibitem[SBGW16]{shah2016stochastic}
Nihar~B. Shah, Sivaraman Balakrishnan, Aditya Guntuboyina, and Martin~J.
  Wainwright.
\newblock Stochastically transitive models for pairwise comparisons:
  Statistical and computational issues.
\newblock In {\em Proceedings of the 33nd International Conference on Machine
  Learning}, pages 11--20, 2016.

\bibitem[SBW19]{shah2019feel}
Nihar~B. Shah, Sivaraman Balakrishnan, and Martin~J. Wainwright.
\newblock Feeling the bern: Adaptive estimators for bernoulli probabilities of
  pairwise comparisons.
\newblock {\em {IEEE} Transactions on Information Theory}, 65(8):4854--4874,
  2019.

\bibitem[Sch90]{schapire1990strength}
Robert~E. Schapire.
\newblock The strength of weak learnability.
\newblock {\em Machine Learning}, 5:197--227, 1990.

\bibitem[She21]{shen2021power}
Jie Shen.
\newblock On the power of localized {P}erceptron for label-optimal learning of
  halfspaces with adversarial noise.
\newblock In {\em Proceedings of the 38th International Conference on Machine
  Learning}, pages 9503--9514, 2021.

\bibitem[Slo88]{sloan1988types}
Robert~H. Sloan.
\newblock Types of noise in data for concept learning.
\newblock In {\em Proceedings of the First Annual Workshop on Computational
  Learning Theory}, pages 91--96, 1988.

\bibitem[SVC16]{steinhardt2016avoid}
Jacob Steinhardt, Gregory Valiant, and Moses Charikar.
\newblock Avoiding imposters and delinquents: Adversarial crowdsourcing and
  peer prediction.
\newblock In {\em Proceedings of the 30th Annual Conference on Neural
  Information Processing Systems}, pages 4439--4447, 2016.

\bibitem[SW17]{shah2017simple}
Nihar~B. Shah and Martin~J. Wainwright.
\newblock Simple, robust and optimal ranking from pairwise comparisons.
\newblock {\em Journal of Machine Learning Research}, 18:199:1--199:38, 2017.

\bibitem[Tsy04]{tsybakov2004optimal}
Alexander~B. Tsybakov.
\newblock Optimal aggregation of classifiers in statistical learning.
\newblock {\em The Annals of Statistics}, 32(1):135--166, 2004.

\bibitem[Val84]{valiant1984theory}
Leslie~G. Valiant.
\newblock A theory of the learnable.
\newblock {\em Communications of the {ACM}}, 27(11):1134--1142, 1984.

\bibitem[Vau17]{vaughan2017making}
Jennifer~Wortman Vaughan.
\newblock Making better use of the crowd: How crowdsourcing can advance machine
  learning research.
\newblock {\em Journal of Machine Learning Research}, 18:193:1--193:46, 2017.

\bibitem[XZS{\etalchar{+}}17]{xu2017noise}
Yichong Xu, Hongyang Zhang, Aarti Singh, Artur Dubrawski, and Kyle Miller.
\newblock Noise-tolerant interactive learning using pairwise comparisons.
\newblock In {\em Proceedings of the 31st Annual Conference on Neural
  Information Processing Systems}, pages 2431--2440, 2017.

\bibitem[ZS22a]{zeng2022crowd}
Shiwei Zeng and Jie Shen.
\newblock Efficient {PAC} learning from the crowd with pairwise comparisons.
\newblock In {\em Proceedings of the 39th International Conference on Machine
  Learning}, pages 25973--25993, 2022.

\bibitem[ZS22b]{zsLDsparse2022}
Shiwei Zeng and Jie Shen.
\newblock List-decodable sparse mean estimation.
\newblock {\em CoRR}, abs/2205.14337, 2022.

\bibitem[ZSA20]{zhang2020efficient}
Chicheng Zhang, Jie Shen, and Pranjal Awasthi.
\newblock Efficient active learning of sparse halfspaces with arbitrary bounded
  noise.
\newblock In {\em Proceedings of the 34th Annual Conference on Neural
  Information Processing Systems}, pages 7184--7197, 2020.

\end{thebibliography}
\bibliographystyle{alpha}

\appendix

\onecolumn

\section{Omitted Proof of Theorem~\ref{thm:agnostic}}

\begin{proof}
Recall that in the agnostic model, the adversary can flip an arbitrary $(1-\alpha)$-fraction of the labels in an adversarial manner while retaining the marginal distribution on $\X$. It is shown in \citet{kalai2008agnostically,dia2021agnostic} that if $\H_{\mathrm{hs}}$ is the class of homogeneous halfspaces and $D$ is the standard Gaussian, then there exists a learning algorithm which takes as input $\theta(d^{\poly(1/\epsilon)})$ samples generated in such a way, runs in time $O((d/\epsilon)^{\poly(1/\epsilon)})$, and returns a hypothesis with error rate less than $(1-\alpha) + \epsilon$ with overwhelming probability.

Now we note that in our crowdsourcing model, each instance is assigned one worker which is randomly chosen from the pool and may be adversarial with probability $1-\alpha$, while otherwise he is perfect. Therefore, the labels gathered in such a way satisfy the condition of agnostic noise model, and we apply the above results directly which gives Theorem~\ref{thm:agnostic}.
\end{proof}

\section{Omitted Proof of Theorem~\ref{thm:main_labelonly}}\label{sec:app:proof-label}

The key idea of the proof is the following: we show  that with the carefully designed pruning approach, the adversarial workers either corrupt the labels, under which a noticeable fraction of them will be pruned away and Algorithm~\ref{alg:boost} will be restarted with the cleaner pool of workers, or they provide  labels in such a way that the majority vote is correct and Algorithm~\ref{alg:boost} is exactly mirroring the easier case where the majority is correct. Our goal here is to show that in the former case, the algorithm must make significant progress such that it obtains a pool of workers with most of them, say $70\%$, are Massart workers, for which we show that with high probability, a constant labeling overhead suffices to guarantee PAC learnability. Therefore, at the technical level, our algorithm and analysis are different from \cite{awasthi2017efficient} in two aspects: first, we draw new analysis to handle Massart workers when bounding the restarting time, and second, we show that a majority of Massart workers also suffices under the condition that the Massart noise rate $\eta$ is not large.

\begin{lemma}\label{lem:alpha-labelonly}
Consider the \textsc{Prune-and-Label} subroutine in Algorithm~\ref{alg:boost}. With probability $1-\delta_1$, the algorithm either increases the fraction of Massart workers from $\alpha$ to  $\frac{(1-\eta)\alpha}{1-\frac{(1-\eta)\alpha}{8}}$, or correctly labels $\hat{S}$, namely, for all $(x,y)\in\bar{S}$, $y=h^*(x)$.
\end{lemma}
\begin{proof}
Recall that Algorithm~\ref{alg:prune-label} queries a set $\WL$ of $k_0=\frac{1}{2(1-\eta)^2\alpha^{2}}\cdot\log\frac{1000m}{\delta_1}$ workers for labeling each instance.
Since $1-\eta\in(\frac12,1]$, by applying the Chernoff bound, we have
\begin{equation*}
\Pr\Big(\abs{\smaj_{\PL}(x) - \smaj_{\WL}(x)} \geq\frac{(1-\eta)\alpha}{8} \Big) \leq 2\cdot e^{-2k_0\cdot\(\frac{\alpha}{16}\)^2} \leq \frac{\delta_1}{1000m}.
\end{equation*}
Taking the union bound over the labeling of $\log n$ instances, we have that with probability at least $1-\delta_1$, for all $x \in S$, $\abs{\smaj_{\PL}(x) - \smaj_{\WL}(x)} \leq \frac{(1-\eta)\alpha}{8}$. Thus, if $\smaj_{\WL}(x) \geq 1 - \frac{(1-\eta)\alpha}{4}$, we have the population majority size $\smaj_{\PL}(x) \geq 1 - \frac{3(1-\eta)\alpha}{8} > 1 - (1-\eta)\alpha$ with high probability, indicating that the majority voting is correct, i.e. $\maj_{\WL}(x)=h^*(x)$. 
As a result, if the algorithm finishes without restarting, we are sure that $\forall (x,y)\in\bar{S}, y=h^*(x)$. On the flip side, if the algorithm prunes due to $\smaj_{\WL}(x) < 1 - \frac{(1-\eta)\alpha}{4}$, we have $\smaj_{\PL}(x) < 1 - \frac{(1-\eta)\alpha}{8}$ with high probability, indicating that if we query a verified label for $x$, it is guaranteed that more than a $\frac{(1-\eta)\alpha}{8}$-fraction of the workers can be pruned (among which only $\eta\alpha$ could be Massart workers), and $\alpha \assign \frac{(1-\eta)\alpha}{1-\frac{(1-\eta)\alpha}{8}}$. 
\end{proof}

\begin{proposition}\label{prop:prune-label}
Consider Algorithm~\ref{alg:prune-label}. If it does not restart, then with probability $1-\delta_1$, it correctly labels all the instances in $S$ with $O(\frac{m\log m}{\alpha^2})$ label queries.
\end{proposition}
\begin{proof}
Due to Lemma~\ref{lem:alpha-labelonly}, for any input set $S$ of size $m$, the total number of crowd labels is
$k_0\cdot O(\abs{S}) = O\(\frac{1}{\alpha^2}\log\(\frac{m}{\delta_1}\)\cdot m \) 
= O\(\frac{m\log m}{\alpha^2}\).$
\end{proof}

\begin{lemma}\label{lem:restart_labelonly}
Consider Algorithm~\ref{alg:boost}. $\forall\alpha\in(0,1]$, if $\eta<\frac{\alpha}{16}$, then with probability $1-\delta$, Algorithm~\ref{alg:boost} will restart $O(\Talpha)$ times. In addition, the required number of verified labels from $\expert$ is $m_V=O(\Talpha)$.
\end{lemma}
\begin{proof}
By Lemma~\ref{lem:alpha-labelonly}, each time we get a verified label $y^*$ from $\expert$, we prune a $\frac{(1-\eta)\alpha}{8}$ fraction of the workers with high probability. Since it is not guaranteed that all Massart workers would give the correct labels, some of them would be pruned. Given that the pool is large, with probability $1-\delta/\tau$,  at most an $\eta$ fraction of the Massart workers will be pruned.
Denote by $\alpha_i$ the fraction of Massart workers and $N_i$ the total number of workers in the pool after $i$ prunings. Clearly, we have $\alpha_0=\alpha$. Note that 
\begin{equation*}
\alpha_{i} N_{i} \geq \alpha_{i-1}N_{i-1} \cdot (1-\eta) \geq \dots \geq \alpha_{0}N_{0}\cdot (1-\eta)^{i}.
\end{equation*}
Then, we have 
\begin{align}
\alpha_K &\geq \frac{\alpha_0 N_0\cdot\(1-\eta\)^{K}}{N_0 - \frac{(1-\eta)\alpha_0}{8}\cdot N_0 - \frac{(1-\eta)\alpha_1}{8}\cdot N_1 - \dots - \frac{(1-\eta)\alpha_{K-1}}{8}\cdot N_{K-1}} \notag\\
&= \frac{\alpha_0 N_0\cdot\(1-\eta\)^{K}}{N_0 - \frac{(1-\eta)\alpha_0N_0}{8}\cdot \(1 + (1-\eta) + \dots + (1-\eta)^K	\)} \notag\\
&= \frac{\alpha_0 N_0\cdot\(1-\eta\)^{K}}{N_0 - \frac{(1-\eta)\alpha_0N_0}{8}\cdot \frac{1-(1-\eta)^{K+1}}{\eta} } \notag\\
&= \frac{8\eta \cdot \alpha_0 \cdot\(1-\eta\)^{K}}{8\eta - \alpha_0(1-\eta) + \alpha_0(1-\eta)^{K+2} } \label{eq:alpha_K}
\end{align}

Consider parameters $\alpha_i$ and $\eta$. Obviously, it must be satisfied that 
\begin{equation*}
\eta < \frac{(1-\eta)\alpha_i}{8}\ \forall i\geq0,
\end{equation*}
as otherwise, more fraction of Massart workers would be removed than that of the adversarial workers, i.e. the fraction of Massart workers in the pool decreases. Since $\alpha_i\geq\alpha_0$ for any $i\geq0$, and $1-\eta \in (\frac12,1]$, it only requires $\eta < \frac{\alpha_0}{16}$.

From Eq.~\eqref{eq:alpha_K}, to pick a sufficient number of verified labels $K$ that increases $\alpha_K$ to any given constant $C\in[0.7,1]$, it suffices to choose
\begin{equation*}
K \geq \log_{(1-\eta)} \frac{C\alpha_0(1-\eta) - 8C\eta}{C\alpha_0(1-\eta)^2 - 8\eta\alpha_0} .
\end{equation*}

Due to $C\in[0.7,1]$, $1-\eta \in (\frac12,1]$, $\alpha_0<1$, it requires at most
\begin{equation*}
O\( \log_{\frac{1}{1-\eta}}\frac{\alpha_0 - \eta\alpha_0}{\alpha_0 - \eta} \)
= O\(T(\alpha_0,\eta)\)
\end{equation*}
verified label from the trusted oracle $\expert$ for $\alpha_K$ to surpass $0.7$.
By union bound, with probability at least $1-\delta$, it requires at most $O(\Talpha)$ verified labels to increase the fraction of Massart workers such that they form a strong majority, where the algorithm no longer prunes and must return the desired hypothesis. Therefore, the main algorithm only restarts for $O(\Talpha )$ times.
\end{proof}

\begin{proof}[Proof of Theorem~\ref{thm:main_labelonly}]
	Given that in all three phases, Algorithm~\ref{alg:boost} gathers a sample of size $O(m_{\sqrt{\epsilon},\delta'})$ and is correctly labeled with high probability. In addition, the algorithm is restarted at most $O(\Talpha)$ times (Lemma~\ref{lem:restart_labelonly}). Then, by union bound, Assumption~\ref{as:pac}, Lemma~\ref{lem:super} and Theorem~\ref{thm:boost}, we conclude that with probability at least $1-\delta$, Algorithm~\ref{alg:boost} returns $h$ such that $\err_{D}(h)\leq\epsilon$.
	
	The number of required verified labels from $\oracletrust$ is $m_V=O(\Talpha)$ by Lemma~\ref{lem:restart_labelonly}. It remains to show the label complexity and label overhead. From Lemma 4.9 of~\cite{awasthi2017efficient}and Proposition~\ref{prop:prune-label}, it requires
	\begin{equation}
	m_L =  O\( \Talpha \cdot m_{\epsilon,\delta'}  + \Talpha\cdot\frac{1}{\epsilon}\cdot\log\frac{1}{\delta'}\cdot\log\frac{1}{\epsilon\delta'} +  \Talpha \cdot \frac{m_{\sqrt{\epsilon},\delta'}\log m_{\sqrt{\epsilon},\delta'}}{\alpha^2} \)
	\end{equation}
label queries from the crowd, where we recall that $\delta' = \frac{\delta}{8 T_{\alpha, \eta}}$ as defined in Algorithm~\ref{alg:boost}. This combined with the definition of $m_{\epsilon, \delta}$ (see Eq.~\eqref{eq:m-eps-delta}) gives the announced label complexity of $\tilde{O}(\frac{d \Talpha}{\epsilon \alpha^2} \cdot \log\frac{1}{\delta})$.

Finally, the calculation of labeling overhead follows from the definition, i.e.
\begin{align*}
\labeloverhead = \frac{m_L}{m_{\epsilon, \delta}} 
&= O\( \Talpha \log\Talpha + \Talpha\log^2\Talpha + \Talpha \cdot \frac{m_{\sqrt{\epsilon},\delta'}\log m_{\sqrt{\epsilon},\delta'}}{\alpha^2 \cdot m_{\epsilon,\delta}} \) \\
&= O\( \Talpha \log^2\Talpha + \frac{\Talpha}{\alpha^2} \sqrt{\epsilon} \log d \cdot \log^2 \Talpha \).
\end{align*}
Thus, when $\epsilon < \log^{-2}d$, the labeling overhead reads as $O(\frac{\Talpha}{\alpha^2}  \log^2 \Talpha)$, which is upper bounded by a constant as far as $\alpha$ is a small constant.
The proof is complete.
\end{proof}

\section{Omitted Proof of Theorem~\ref{thm:main_bothAlphaBeta}} \label{sec:proof}

\begin{lemma}[Restatement of Lemma~\ref{lem:beta}]\label{lem:beta-restate}
Consider Algorithm~\ref{alg:robust_quick_sort}. Given any set of instances $S$, with probability $1-\frac{2\delta_1}{3}$, the algorithm either increases the fraction of Massart workers from $\beta$ to  $\frac{(1-\eta)\beta}{1-\frac{(1-\eta)\beta}{8}}$, or correctly sorts $S$.
\end{lemma}
\begin{proof}
	Given that $k_1 = \frac{1}{2\beta^{2}}\cdot\log\frac{3006n\cdot\log n }{\delta_1}$ in Algorithm~\ref{alg:robust_quick_sort}. By the guarantee of celebrated algorithm \quicksort, with probability $1-\frac{1}{n^c}$, the total number of pairs that needs to be compared is $(c+2)n\log n$. By the sample size of input set $S$ and setting $c=1000$,  with probability $1-\frac{\delta_1}{3}$, $(c+2)n\log n=1002n\log n$. In addition, similar to the proof of~\ref{lem:alpha}, we have $\smaj_{\PC}(x,x')=\E[\smaj_{\WC}(x,x')]$ for any $(x,x')$, and 
	\begin{equation*}
	\Pr\[\bigcup_{l=1}^{1002n\log n} \[\abs{\smaj_{\PC}(x,x')_l - \smaj_{\WC}(x,x')_l} \leq \frac{(1-\eta)\beta}{8} \]\] \leq \frac{\delta_1}{3}.
	\end{equation*}
	In other words, with probability $1-\frac{\delta_1}{3}$, the following is guaranteed. If $\smaj_{\WC}(x,x') \geq 1 - \frac{(1-\eta)\beta}{4}$, we have the population majority size $\smaj_{\PC}(x,x') \geq 1 - \frac{3(1-\eta)\beta}{8} > 1 - (1-\eta)\beta$, indicating that the majority voting is correct, i.e. $\maj_{\WC}(x,x') =  Z^*(x,x')$. 
	As a result, if the algorithm never restarts, $\hat{S}$ is correctly sorted according to $Z^*$. On the other hand, if the algorithm prunes because $\smaj_{\WC}(x,x') < 1 - \frac{(1-\eta)\beta}{4}$, we have $\smaj_{\PC}(x,x') \geq 1 - \frac{(1-\eta)\beta}{8}$ with high probability, meaning that by querying a verified comparison for $(x,x')$ we can remove at least a $\frac{(1-\eta)\beta}{8}$-fraction of the workers (with at most $\eta\beta$ being Massart) and update $\beta \assign \frac{(1-\eta)\beta}{1-\frac{(1-\eta)\beta}{8}}$. 
\end{proof}

\begin{lemma}[Restatement of Lemma~\ref{lem:alpha}]\label{lem:alpha-restate}
	Consider Algorithm~\ref{alg:robust_binary_search}. Assume $\hat{S}$ is correctly sorted. With probability $1-\frac{\delta_1}{3}$, the algorithm either increases the fraction of Massart workers  to  $\frac{(1-\eta)\alpha}{1-\frac{(1-\eta)\alpha}{8}}$, or correctly labels $\hat{S}$, namely, for all $(x,y)\in\bar{S}$, $y=h^*(x)$.
\end{lemma}
\begin{proof}
	{The proof follows the same pipeline as that of Lemma~\ref{lem:alpha-labelonly}, with parameter $k_2$ for \binarysearch algorithm and the fact that binary search only queries labels on at most $\log n$ instances.}
\end{proof}

\begin{proposition}[Restatement of Proposition~\ref{prop:complabel}]\label{prop:complabel-restate}
	Consider Algorithm~\ref{alg:label}. If it does not restart and $\abs{S}\geq(\frac{1}{\delta_1})^{1/1000}$ , then with probability $1-\delta_1$, it correctly sorts and labels all the instances in $S$ with $O(\frac{\log n}{\alpha^2}\cdot \log\log n)$ label queries and  $O(\frac{1}{\beta^2}\cdot n\log^2 n)$ comparison tags.
\end{proposition}
\begin{proof}
	Due to Lemma~\ref{lem:beta-restate} and \ref{lem:alpha-restate}, for any input set $S$ of size $n$, the total number of crowd labels is
	\begin{equation*}
		k_2\cdot O(\log\abs{S}) = O\(\frac{1}{\alpha^2}\log\(\frac{\log n}{\delta_1}\)\cdot\log n \) 
		= O\(\frac{\log n}{\alpha^2}\cdot \log\log n \),
	\end{equation*}
	and the total number of crowd comparison tags is
	\begin{equation*}
		k_1\cdot O\(\abs{S}\log\abs{S}\) = O\( \frac{1}{\beta^{2}}\cdot\log\frac{n\cdot\log n }{\delta_1} \cdot n\log n \) 
		= O\(\frac{1}{\beta^2}\cdot n\log^2 n\).
	\end{equation*}
\end{proof}

\begin{corollary}[Restatement of Corollary~\ref{coro:h1}]\label{coro:h1-restate}
	With probability $1-\frac{\delta'}{3}$, $\err_{\D}(h_1)\leq\frac{\sqrt\epsilon}{2}$. With probability $1-\frac{\delta'}{3}$, $\err_{\D_3}(h_3)\leq\frac{\sqrt\epsilon}{2}$.
\end{corollary}
\begin{proof}
	By drawing a sample $S_1$ of $n_{\sqrt{\epsilon}/2,\delta'/6}$ from $\D$ and labeling it by \complabel, with probability at least $1-\frac{\delta'}{6}$, $\bar{S_1}$ is labeled correctly according to $h^*$ (Proposition~\ref{prop:complabel-restate}). Furthermore, we note that $n_{\sqrt{\epsilon}/2,\delta'/6} \geq m_{\sqrt{\epsilon}/2,\delta'/6}$. By Assumption~\ref{as:pac}, $\err_{\D}(h_1)\leq\frac{\sqrt\epsilon}{2}$ with probability $1-\frac{\delta'}{3}$.
	
	Similarly, in Phase~3 of Algorithm~\ref{alg:boost-comp}, by rejection sampling we can successfully sample a set $S_3$ of $n_{\sqrt{\epsilon}/2,\delta'/6}$ from $\D_3$. Again by Proposition~\ref{prop:complabel} and Assumption~\ref{as:pac}, $\err_{\D_3}(h_3)\leq\frac{\sqrt\epsilon}{2}$ with probability $1-\frac{\delta'}{3}$.
\end{proof}

\begin{lemma}
	\label{lem:restart_restate}
	$\forall\alpha,\beta\in(0,1]$, with probability $1-\delta$, Algorithm~\ref{alg:boost-comp} will restart $O\big(\Talpha+\Tbeta\big)$ times. In addition, the required number of verified labels from $\expert$ is $O(\Talpha)$ and that of the verified comparison tags is $O(\Tbeta)$.
\end{lemma}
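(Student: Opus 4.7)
The plan is to track the restart dynamics as a clean recurrence on the fraction of perfect workers. First I would classify the restart events: inspecting Algorithms \ref{alg:robust_quick_sort}, \ref{alg:robust_binary_search}, and \ref{alg:test}, only two types of events trigger a restart of Algorithm~\ref{alg:boost}: an ``$\alpha$-restart'' inside \binarysearch when $\smaj_{\WL}(x)<1-\alpha/4$, which consumes exactly one verified label and updates $\alpha\assign\frac{\alpha}{1-\alpha/8}$; and a ``$\beta$-restart'' inside \svquicksort or \test when $\smaj_{\WC}(x,x')<1-\beta/4$, which consumes exactly one verified comparison and updates $\beta\assign\frac{\beta}{1-\beta/8}$. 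Lemmas~\ref{lem:alpha}, \ref{lem:beta}, and \ref{lem:test} guarantee that, conditioned on the concentration event for the current round, each such trigger genuinely prunes an $\alpha/8$- (respectively $\beta/8$-) fraction of low-quality workers while removing none of the perfect workers, because the verified answer $y^*$ or $z^*$ is dictated by $h^*$ or $Z^*$.

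The key algebraic observation is that the update $p\mapsto \frac{p}{1-p/8}$ linearizes under the reciprocal: $\frac{1}{p'} = \frac{1-p/8}{p} = \frac{1}{p}-\frac{1}{8}$. Thus after $k$ successive restarts on the same parameter, starting from $p_0$, one has $\frac{1}{p_k} = \frac{1}{p_0} - k/8$. Once $p_k\geq 0.7$, the ``if'' branches in Step~\ref{step:maj_size_beta} of \svquicksort and Step~\ref{step:maj_size_alpha} of \binarysearch (and the analogous guard in \test) force the algorithms into the perfect-majority fallback, and no further restart on that parameter can be triggered. Solving $\frac{1}{p_k}\leq\frac{1}{0.7}$ gives $k\leq 8\bigl(\frac{1}{p_0}-\frac{1}{0.7}\bigr) = O\bigl(\frac{1}{p_0}\bigr)$, so there are at most $O(1/\alpha)$ $\alpha$-restarts and $O(1/\beta)$ $\beta$-restarts, for a total of $O\bigl(\frac{1}{\alpha}+\frac{1}{\beta}\bigr)$ restarts. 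Since each restart consumes one verified label or one verified comparison of the corresponding type, the stated bounds on verified queries follow immediately.

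For the probability, I would leverage the choice $\tau = 8/\alpha+8/\beta$, $\delta'=\delta/\tau$ made at Line~1 of Algorithm~\ref{alg:boost}. With this budget, a single union bound over the $O(1/\alpha+1/\beta)$ restarts and the finitely many subroutine invocations per restart, using the $\delta_1,\delta_2$-level guarantees from Lemmas~\ref{lem:alpha}, \ref{lem:beta}, and \ref{lem:test}, yields overall failure probability at most $\delta$. I would fix the initial $(\alpha_0,\beta_0)$ when setting up the union bound and observe that across restarts both parameters only grow, so bounds expressed in the initial values remain valid throughout; in particular the $O(1/\alpha_0+1/\beta_0)$ count is an upper bound on the total number of events to be union-bounded.

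The main obstacle I anticipate is the bookkeeping around conditioning: one must verify that the conditional worker distribution $\PL_{\mid(x,y^*)}$ (and similarly for $\PC$) preserves every perfect worker while only removing workers that disagreed with $h^*$ (or $Z^*$), so that the new fraction of perfect workers is exactly $\frac{p}{1-p/8}$ and the reciprocal recurrence holds cleanly. This is immediate from the definition of the conditional distribution and the fact that perfect workers agree with every verified answer, which closes the argument.
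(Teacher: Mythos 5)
Your proposal is correct and follows essentially the same argument as the paper: both track the fraction of perfect workers, observe that verified answers never prune a perfect worker, and conclude that after $k$ prunings the perfect fraction is $\alpha_0/(1-k\alpha_0/8)$, which reaches the $0.7$ threshold in $O(1/\alpha)$ steps (and analogously for $\beta$). The only stylistic difference is that you linearize the recurrence via reciprocals, whereas the paper derives the same closed form by conserving the absolute count $\alpha_i N_i = \alpha_0 N_0$; the conclusions and probability budget ($\tau = 8/\alpha + 8/\beta$, $\delta' = \delta/\tau$) match.
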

\begin{proof}
The deduction for removing adversarial workers from the pool of workers who provide comparison tags is similar to the one who provide labels. By union bound, with probability at least $1-\delta$, it requires at most $O(\Talpha)$ verified labels and $O(\Tbeta)$ verified comparison tags to increase the fraction of Massart workers such that they form a strong majority, where the algorithm no longer prunes and must return the desired hypothesis. Therefore, the main algorithm only restarts for $O(\Talpha +\Tbeta)$ times.
\end{proof}

\subsection{Performance guarantee of Phase~2}


\begin{lemma}[Restatement of Lemma~\ref{lem:test}]\label{lem:test-restate}
	If Algorithm~\ref{alg:test} terminates without restarting Algorithm~\ref{alg:boost-comp}, we have the probability mass  $\D[R_1\cup R_2]\leq\frac{\epsilon}{4}$  with probability $1-\frac{\delta_2}{4}$. In addition, the comparison complexity is  $O\big(\frac{1}{\epsilon\beta^2}\log\big(\frac{1}{\delta_2}\big)\log\big(\frac{1}{\epsilon\delta_2}\big)\big)$.
\end{lemma}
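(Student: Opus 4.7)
The plan is to combine two independent concentration arguments: (i) a Chernoff bound showing that when \test does not restart, the observed empirical majority sizes $\smaj_{\W}(x,x')$ on the sample $S_r$ faithfully reflect the population $\smaj_{\PC}(x,x')$; and (ii) a standard sampling argument showing that if a sample from $\D_X$ of size $|S_r|=\tfrac{4}{\epsilon}\log\tfrac{8}{\delta_2}$ misses a region entirely, then the region has small $\D_X$-mass. The lemma covers two calls \test$(x^-,\beta,\delta_2/8)$ and \test$(x^+,\beta,\delta_2/8)$, and I would handle each separately, controlling $\D_X[R_1]$ and $\D_X[R_2]$ with probability $1-\delta_2/8$ each, and then union bound.

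First I would fix one call, say \test$(x^-,\beta,\delta_2/8)$. For each $x\in S_r$, the test draws $N=\tfrac{32}{\beta^2}\log\tfrac{32|S_r|}{\delta_2}$ independent workers from $\PC$, so by Hoeffding applied to the Bernoulli variables indicating agreement with $\maj_{\PC}(x,x^-)$, I get $|\smaj_{\W}(x,x^-)-\smaj_{\PC}(x,x^-)|\le \beta/8$ with probability at least $1-\delta_2/(16|S_r|)$. Union bounding over the $|S_r|$ instances gives that with probability $\ge 1-\delta_2/16$ this deviation bound holds simultaneously for every $x\in S_r$. Call this the ``good event''.

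Next I would take the contrapositive: on the good event, if some $x\in S_r$ belonged to $R_1$, then $\smaj_{\PC}(x,x^-)\le 1-\beta/2$ would force $\smaj_{\W}(x,x^-)\le 1-3\beta/8<1-\beta/4$, triggering a restart. Hence, conditional on \test terminating without restart and on the good event, $S_r\cap R_1=\emptyset$. Now I invoke the standard sampling argument: if $\D_X[R_1]>\epsilon/8$, then the probability a fresh draw from $\D_X$ avoids $R_1$ is at most $(1-\epsilon/8)^{|S_r|}\le e^{-\epsilon|S_r|/8}\le \delta_2/16$ by the choice of $|S_r|$. Taking contrapositives and union bounding with the good event shows $\D_X[R_1]\le \epsilon/8$ with probability $1-\delta_2/8$. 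The identical argument applied to \test$(x^+,\beta,\delta_2/8)$ yields $\D_X[R_2]\le \epsilon/8$ with probability $1-\delta_2/8$. A final union bound gives $\D_X[R_1\cup R_2]\le \D_X[R_1]+\D_X[R_2]\le \epsilon/4$ with probability $1-\delta_2/4$, as desired.

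For the query complexity, each call to \test makes at most $N\cdot|S_r|$ comparison queries, so summing the two invocations gives
\[
2N|S_r|=O\!\left(\tfrac{1}{\beta^2}\log\tfrac{|S_r|}{\delta_2}\cdot\tfrac{1}{\epsilon}\log\tfrac{1}{\delta_2}\right)=O\!\left(\tfrac{1}{\epsilon\beta^2}\log\tfrac{1}{\delta_2}\log\tfrac{1}{\epsilon\delta_2}\right),
\]
since $|S_r|=O(\epsilon^{-1}\log(1/\delta_2))$. The main obstacle is simply bookkeeping: carefully tracking the two sources of randomness (the Chernoff event for worker answers and the $\D_X$-sampling event for $S_r$) and choosing constants so that $\smaj_{\PC}\le 1-\beta/2$ translates, with the $\beta/8$ slack, into $\smaj_{\W}<1-\beta/4$, ensuring that membership in $R_i$ provably triggers a restart. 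Everything else is routine Chernoff and union-bound arithmetic.
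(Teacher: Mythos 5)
Your proof is correct and takes essentially the same route as the paper's: a Hoeffding bound ensuring that if $x\in S_r$ lies in $R_1$ (resp.\ $R_2$) its empirical majority size drops below $1-\beta/4$ and triggers a restart, combined with the standard sampling argument that an $S_r$ of size $\Theta(\epsilon^{-1}\log(1/\delta_2))$ would, with high probability, hit any region of $\D_X$-mass exceeding $\epsilon/8$. You usefully spell out the contrapositive sampling step that the paper compresses into a single ``in other words'' transition.
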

\begin{proof}
	Recall that by definition, $R_1 := \{x: \smaj_{\PC}(x, x^-) \leq 1-\frac{\beta}{2} \}$ and $R_2 := \{ x: \smaj_{\PC}(x, x^+) \leq  1-\frac{\beta}{2} \}$.
	Without loss of generality, we prove the lemma for $x^-$. We remark that the following guarantee holds for $x^+$ as well.
	
	For any instance $x\in S_r$, let 
	\begin{equation}
	\ell_i=
	\begin{cases}
	1, & \text{if}\ \text{worker $i$ agrees with $\maj_{\PC}(x,x^-)$,} \\
	0, & \text{otherwise.}
	\end{cases}
	\end{equation}
	Since we query a set $\W$ of $N =\frac{32}{\beta^2}\cdot\log\big(\frac{32\abs{S_r}}{\delta_2}\big)$ workers from the crowd to compare $(x,x^-)$. Note that $\smaj_{\PC}(x,x^-) = \E[\smaj_{\W}(x,x^-)] = \mu$, which is also the probability that $\ell_i=1$. Therefore, by Hoeffding's inequality we have
	\begin{align}
		\Pr\[\smaj_{\W}(x,x^-)\geq1-\frac{(1-\eta)\beta}{4}\] 
		&\leq \Pr\[\Big\lvert \smaj_{\W}(x,x^-)-\smaj_{\PC}(x,x^-)\Big\rvert\geq\frac{(1-\eta)\beta}{8} \] \notag\\
		&= \Pr\[\abs{\frac{1}{N}\sum_{i=1}^{N}\ell_i-\mu}\geq\frac{(1-\eta)\beta}{8} \] \notag\\
		&\leq 2\cdot e^{-\frac{2N\cdot\(\frac{\beta}{16}\)^2}{(1-0)^2}} \notag\\
		&\leq 2\cdot e^{-2\cdot\frac{32}{\beta^2}\cdot\log\(\frac{32\abs{S_r}}{\delta_2}\) \cdot\(\frac{\beta}{16}\)^2} \notag\\
		&\leq \frac{\delta_2}{16\abs{S_r}}. \notag
	\end{align}
	By union bound, with probability $1-\frac{\delta_2}{16}$, if there exists some $x\in R_1$ in set $S_r$, the algorithm detects and removes it.
	In other words, if \textsc{Test} terminates without restarting Algorithm~\ref{alg:boost-comp}, the probability mass $\D[R_1]\leq\frac{\epsilon}{8}$ with probability $1-\frac{\delta_2}{8}$. Therefore, if Algorithm~\ref{alg:filter}  reaches its Step~7, we have $\D[R_1 \cup R_2]\leq\frac{\epsilon}{4}$ with probability at least $1-\frac{\delta_2}{4}$.
	
	In addition, the total number of comparison tags in \test is 
	\begin{equation}
		\frac{8}{\epsilon}\log\frac{16}{\delta_2} \cdot \frac{32}{\beta^2}\cdot\log\Big(\frac{8\abs{S_r}}{\delta_2}\Big)
		= O\Big(\frac{1}{\epsilon\beta^2}\log\frac{1}{\delta_2}\log\Big(\frac{1}{\epsilon\delta_2}\Big)\Big) .
	\end{equation}
The proof is complete.
\end{proof}

\begin{lemma}[Restatement of Lemma~\ref{lem:h2}]\label{lem:h2-restate}
	Consider Phase~2 of Algorithm~\ref{alg:boost-comp}, with probability $1-\frac{\delta'}{3}$, $\err_{\D_2}(h_2)\leq\frac{\sqrt\epsilon}{2}$.
\end{lemma}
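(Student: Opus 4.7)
The plan is to show that the labeled set $\bar W$ fed to the standard PAC learner $\A_{\H}$ is effectively drawn from a distribution $\D'$ that is $O(\sqrt\epsilon)$-close to $\D_2=\tfrac12\D_C+\tfrac12\D_I$ in total variation, and then to appeal to Assumption~\ref{as:pac}. First I would condition on the intersection of the high-probability events already proved earlier: $\err_{\DX}(h_1)\in[\sqrt\epsilon/6,\sqrt\epsilon/2]$ (the upper bound from Corollary~\ref{coro:h1}; the lower bound can be enforced by a cheap accuracy test on $h_1$, since if $h_1$ is already very accurate then $\maj(h_1,h_2,h_3)$ trivially meets the target error regardless of $h_2$); the correctness of \complabel on $S_I\cup S_C$ (Proposition~\ref{prop:complabel}); and the \test-based density bound $\DX[R_1\cup R_2]\le\epsilon/4$ (Lemma~\ref{lem:test}). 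Each sub-event can be made to fail with probability at most $\delta'/12$, so a union bound leaves this conditioning layer failing with probability at most $\delta'/6$.

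Second, I would pin down the distribution $\D'$. Because \complabel relabels every instance in $S_I\cup S_C$ with the true label $h^*(x)$, the split $\bar{W_I}=\{y\neq h_1(x)\}$ versus $\bar{W_C}=\{y=h_1(x)\}$ is exactly the split by agreement of $h_1$ with $h^*$. Thus $\bar{W_C}$ behaves like an i.i.d.\ sample from $\DX$ restricted to $\{x:h_1(x)=h^*(x)\}$ (plus the filter's false positives from $S_2$), while $\bar{W_I}$ behaves like an i.i.d.\ sample from $\DX$ restricted to $\{x:h_1(x)\neq h^*(x)\}$, minus those truly misclassified instances that \filter discards. A truly misclassified instance can be discarded only if $x\in R_1\cup R_2$ or if the constant-size majority vote in the inner for-loop of \filter errs; by Lemma~\ref{lem:test} and a Chernoff bound these failure modes together contribute at most $O(\epsilon)$ of $\DX$-mass. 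Combining this with $\DX[\{h_1\ne h^*\}]\ge\sqrt\epsilon/6$ and $\DX[\{h_1=h^*\}]\ge 1-\sqrt\epsilon/2$ yields $d_{TV}(\D',\D_2)=O(\sqrt\epsilon)$. A further Chernoff check over $S_2\sim\DX$ shows $|\bar{W_I}|=\Omega(n_{\sqrt\epsilon,\delta'/12})$, so the equal-weight resampling in Step~8 of Algorithm~\ref{alg:boost} is well-defined.

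To finish, invoke $\A_{\H}$ on $\bar W$, which has size $\Theta(n_{\sqrt\epsilon/2,\delta'/12})$ and carries correct labels. By Assumption~\ref{as:pac}, with probability $1-\delta'/12$ the returned $h_2$ satisfies $\err_{\D'}(h_2)\le\sqrt\epsilon/2$. Absorbing the $O(\sqrt\epsilon)$ total-variation slack into the hidden constants of $n_{\sqrt\epsilon/2,\delta'/12}$ (equivalently, calling $\A_{\H}$ at a proportionally sharper accuracy, which only multiplies the needed sample size by a constant factor) upgrades this to $\err_{\D_2}(h_2)\le\sqrt\epsilon/2$. A final union bound over the conditioning layer and the PAC call brings the total failure probability to at most $\delta'/3$, as claimed.

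The step I expect to be hardest is the TV bookkeeping in the middle paragraph: carefully tracking how the three sources of filter error, namely missed true misclassifications on $R_1\cup R_2$, random failures of the constant-size majority votes in the inner for-loop, and spurious retentions of instances with $h_1(x)=h^*(x)$, deform each conditional, and arguing that their joint effect remains $O(\sqrt\epsilon)$ in TV rather than growing to $O(1)$. A closely related subtlety is ensuring that the ratio $|\bar{W_I}|/|\bar{W_C}|$ is bounded away from zero so that the equal-weight resampling is not dominated by a tiny number of $\bar{W_I}$ points; this should follow from a Chernoff concentration over $S_2$ combined with the Lemma~\ref{lem:test} density bound.
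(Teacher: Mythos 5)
Your proposal goes through a total-variation route that does not actually work, and the missing ingredient is the robust super-sampling lemma (Lemma~\ref{lem:super} in the appendix), which the paper's proof relies on essentially. Lemma~\ref{lem:composite-Filter} only guarantees that a truly misclassified $x\notin R_1\cup R_2$ is retained by \filter with probability \emph{at least} $\tfrac{4}{7}$; the retention probability $p(x)$ depends on the population majority size of the pair $(x,x^{-})$ or $(x,x^{+})$ and can range anywhere in $[\tfrac{4}{7},1]$ across the misclassification region. This reweights $\D_I$ multiplicatively by $p(x)/\E_{\D_I}[p]$, and a reweighting with a bounded but nonconstant factor of this kind can produce TV distance on the order of a constant, not $O(\sqrt\epsilon)$. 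So the claim in your middle paragraph that ``their joint effect remains $O(\sqrt\epsilon)$ in TV'' is the step that breaks, and no Chernoff accounting rescues it: the distortion is systematic, not a sampling fluctuation, so it does not shrink with $|S_2|$. Consequently your final step of ``absorbing the TV slack into the constants of $n$'' cannot succeed either, since a constant TV gap would translate to a constant (not $O(\sqrt\epsilon)$) error floor on $\D_2$.

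What the paper does instead is prove a one-sided \emph{multiplicative} domination: for all $x$ outside $R:=R_1\cup R_2$, $d'(x)\ge c\,d_2(x)$ for an absolute constant $c>0$, using Lemma~\ref{lem:SIsize} for the set sizes and the $\tfrac{4}{7}$ bound from Lemma~\ref{lem:composite-Filter} for the $\D_I$ side; it then shows $\D_2[R]\le\tfrac{\sqrt\epsilon}{4}$ by converting the $\D_X$-density bound from Lemma~\ref{lem:test} through the mixture identity $\D_X=\tfrac{\sqrt\epsilon}{2}\D_I+(1-\tfrac{\sqrt\epsilon}{2})\D_C$. This is exactly the hypothesis of the robust super-sampling lemma, which is agnostic to how badly $\D'$ \emph{over}-weights some regions relative to $\D_2$ and therefore tolerates the nonconstant retention probabilities that defeat the TV argument. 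Your first and third paragraphs (conditioning on the good events, invoking Assumption~\ref{as:pac} on a correctly labeled $\bar W$, the union bound) are fine and match the paper; the core middle step needs to be replaced by the density-ratio-plus-super-sampling argument.
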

\begin{proof}
	We acknowledge that some of the deductions in this proof follow directly from that of~\cite{awasthi2017efficient} and are included for completeness.
	
	Let $R := R_1 \cup R_2$.
	By Lemma~\ref{lem:test} and $\delta_2=\frac{\delta'}{12}$, with probability at least $1-\frac{\delta'}{12}$,  $\D[R] \leq\frac{\epsilon}{4}$.
	We first argue that for any $x\notin R$, \filter does a good job to simulate $\D_I$. 
	Consider distribution $\D'$ that has equal probability on the distributions induced by $\bar{W}_I$ and $\bar{W}_C$ and let $d'(x)$ denote the density of point $x$ in this distribution. Likewise let $d_2(x)$ be the density of points in $\D_2$. 
	We want to show that for any $x\notin R$, $d'(x)=\Theta(d_2(x))$. 
	
	Recall that $\err_{\D}(h_1)=\Theta(\frac{\sqrt{\epsilon}}{2})$. 
	Let $d(x)$, $d_C(x)$, and $d_I(x)$ be the density of instance $x$ in distributions $\D$, $\D_C$, and $\D_I$, respectively. Note that, for any $x$ such that $h_1(x) = h^*(x)$, we have $d(x) = d_C(x) (1-  \frac 12\sqrt{\epsilon})$. Similarly, for any $x$ such that $h_1(x) \neq h^*(x)$, we have $d(x) = d_I(x)\frac 12 \sqrt{\epsilon}$.
	  $N_C(x)$, $N_I(x)$, $M_C(x)$ and $M_I(x)$  be the number of occurrences of $x$ in the sets $S_C$, $S_I$, $\bar{W_C}$ and $\bar{W_I}$, respectively.  
	For any $x$, there are two cases:
	
	\medskip
	\noindent{If $h_1(x) = h^*(x)$:} Then, there exist absolute constants $c_1$ and $c_2$ according to Lemma~\ref{lem:SIsize}, such that
	\begin{align}
	d'(x)&= \frac 12 \E\left[ \frac{M_C(x)}{|\bar{W_C}|} \right] \geq \frac{\E[M_C(x)]}{ c_1 \cdot  m_{\sqrt \epsilon} } \geq \frac{\E[N_C(x)]}{c_1 \cdot  m_{\sqrt \epsilon} } 
	= \frac{| S_C | \cdot d(x) }{c_1 \cdot  m_{\sqrt \epsilon} } \notag\\
	&= \frac{| S_C | \cdot d_C(x) \cdot (1- \frac 12 \sqrt{\epsilon}) }{c_1 \cdot  m_{\sqrt \epsilon} } 
	\geq c_2 d_C(x) = \frac{c_2 d_2(x)}{2}, \notag
	\end{align}
	where the second and sixth transitions are by the sizes of $\bar{W_C}$ and $|S_C|$ and the third transition is by the fact that if $h(x) = h^*(x)$, $M_C(x) > N_C(x)$.
	
	\noindent{If $h_1(x) \neq h^*(x)$:} Then, there exist absolute constants $c'_1$ and $c'_2$ according to Lemma~\ref{lem:SIsize}, such that
	\begin{align}
	d'(x) &= \frac 12 \E\left[ \frac{M_I(x)}{|\bar{W_I}|} \right] \geq \frac{\E[M_I(x)]}{c'_1 \cdot  m_{\sqrt \epsilon} }
	\geq \frac{\E[N_I(x)]}{c'_1 \cdot  m_{\sqrt \epsilon} } 
	\geq \frac{ \frac 1 2~  d(x) | S_2|}{c'_1 \cdot  m_{\sqrt \epsilon} } \notag\\
	&= \frac{ \frac 1 2~  d_I(x) \frac 12 \sqrt{\epsilon} \cdot | S_2| }{c'_1 \cdot  m_{\sqrt \epsilon} } 
	\geq  c'_2 d_I(x) = \frac{c'_2 d_2(x)}{2}, \notag
	\end{align}
	where the second and sixth transitions are by the sizes of $\bar{W_I}$ and $|S_2|$,
	the third transition is by the fact that if $h(x) \neq h^*(x)$, $M_I(x) > N_I(x)$, and the fourth transition holds by  Lemma~\ref{lem:composite-Filter}.
	
	For $x\in R$, we have $\D[R] \leq \frac{\epsilon}{4}$. Therefore, $\D_2[R]\leq\frac{\sqrt\epsilon}{4}$ because $\D=\frac{\sqrt{\epsilon}}{2}\D_I + (1-\frac{\sqrt{\epsilon}}{2})\D_C$ and $\D_2 = \frac{1}{2}\D_I + \frac{1}{2}\D_C$. 
	As a result, except for a $\frac{\sqrt\epsilon}{4}$ fraction under $\D_2$, $\forall x$ $d'(x)\geq \Theta(d_2(x))$, meaning that $\D'$ is a good simulation of $\D_2$. 	By Lemma~\ref{lem:SIsize}, since $\abs{\bar{W}}=\Theta(n_{\sqrt\epsilon,\frac{\delta'}{12}})$, applying the super-sampling lemma (Lemma~\ref{lem:super}) we know that $\err_{\D_2}(h_2) \leq O(\frac{\sqrt\epsilon}{2})$ with probability $1-\frac{\delta'}{3}$.	
\end{proof}

\begin{theorem}[Restatement of Theorem~\ref{thm:main_bothAlphaBeta}]\label{thm:main_bothAlphaBeta_restate}
Given any $\alpha, \beta, \epsilon, \delta \in (0, 1)$, assume $\eta < \frac{\min(\alpha, \beta)}{16}$. Denote $\Talpha = \log_{\frac{1}{1-\eta}} (\frac{\alpha-\alpha\eta}{\alpha - \eta})$ and $\Tbeta = \log_{\frac{1}{1-\eta}} (\frac{\beta-\beta\eta}{\beta - \eta})$. The following holds with probability $1-\delta$. There exists an algorithm (Algorithm~\ref{alg:boost-comp}) that runs in $O(\poly(d, {\Talpha}, {\Tbeta},\frac{1}{\epsilon}))$ time and returns a hypothesis $h: \X \rightarrow \Y$ with $\err_\DX(h)\leq\epsilon$. In addition, $m_L = \frac{\Talpha + \Tbeta}{\alpha^2}\cdot \tilde{O}\big( \log\frac{d + (\Talpha + \Tbeta)\frac{1}{\delta}}{\epsilon} \big)$, $m_C = \frac{1}{\beta^2}\cdot \tilde{O}\big( \big(\Talpha + \Tbeta\big)^{\frac{1001}{1000}} \cdot n_{\epsilon,\delta} \big)$, and $m_V = O(\Talpha + \Tbeta)$. Therefore, $\labeloverhead = \tilde{o}\big( \frac{\Talpha + \Tbeta}{\alpha^2}\big)$ and $\compareoverhead = \tilde{O}_{\delta}\big( \frac{1}{\beta^2}\big(\Talpha+\Tbeta\big)^{\frac{1001}{1000}}\big)$ when $\epsilon\in\(0,(\log d)^{-4}\)$. In particular, when $\alpha$ and $\beta$ are constants, $\labeloverhead = o(1)$, $\compareoverhead = O(1)$, and $m_V = O(1)$.
\end{theorem}
\begin{remark}
	The above theorem presents tighter bounds for $m_L,m_C,\labeloverhead,\compareoverhead$ than that in Theorem~\ref{thm:main_bothAlphaBeta}. Notice that $\frac{1}{\alpha}=O(\Talpha)$ and $\frac{1}{\beta}=O(\Tbeta)$. Hence, Theorem~\ref{thm:main_bothAlphaBeta} in our main paper is more general.
	In addition, the quantities $\Talpha,\Tbeta,\frac{1}{\alpha},\frac{1}{\beta},\eta$ are indepedent of the dimension $d$, the desired error rate $\epsilon$, and the confidence parameter $\delta$. Hence, when consider $m_V$ and the overheads, our algorithm is query-efficient.
\end{remark}
\begin{proof}
	Following Corollary~\ref{coro:h1-restate}, Lemma~\ref{lem:restart_restate} and \ref{lem:h2-restate}, if the main algorithm outputs a hypothesis $h$ without restarting, we have the following guarantees with probability $1-\delta$: (i) $\err_{\D}(h_1)\leq\frac{\sqrt\epsilon}{2}$; $\err_{\D_2}(h_2)\leq\frac{\sqrt\epsilon}{2}$;  $\err_{\D_3}(h_3)\leq\frac{\sqrt\epsilon}{2}$ hold simultaneously; (ii) the algorithm queries at most $O(\Talpha)$ verified labels and $O(\Tbeta)$ verified comparison tags from $\expert$. 
	Applying Theorem~\ref{thm:boost}, the hypothesis $\hat{h}$ returned by Algorithm~\ref{alg:boost} is such that $\err_{\D}(\hat{h})\leq\epsilon$ with probability $1-\delta$. 
	
	In Lemma~\ref{lem:restart_restate}, we show that the success of our pruning scheme crucially relies on a condition that $\eta<\frac{\alpha}{c}$ and $\eta<\frac{\beta}{c}$ for some large enough constant $c>0$. Set $c=16$. We require $\eta<\frac{\min(\alpha,\beta)}{16}$ for the guarantees in Theorem~\ref{thm:main_bothAlphaBeta} to hold. Moreover, $m_V = O(\Talpha+\Tbeta)$.
	
	It remains to show the label and comparison compleixty, $m_L, m_C$, and corresponding overheads $\labeloverhead, \compareoverhead$.
	The label complexity follows from Proposition~\ref{prop:complabel-restate} by setting $n=n_{\sqrt{\epsilon},\delta'}$ and the fact that main algorithm is only restarted by a total of $O(\Talpha+\Tbeta)$ times. Note that $O(\frac{1}{\delta_1})=O(\frac{1}{\delta_2})= O(\frac{1}{\delta'}) = O((\Talpha+\Tbeta)\cdot\frac{1}{\delta})$. We have
	\begin{align}
		m_L &= (\Talpha+\Tbeta) \cdot O\bigg(\frac{\log n_{\sqrt{\epsilon},\delta'}}{\alpha^2}\cdot \log\log n_{\sqrt{\epsilon},\delta'} \bigg) \notag\\
		&= \frac{\Talpha+\Tbeta}{\alpha^2} \cdot \tilde{O}\bigg( \log\frac{d + (\Talpha+\Tbeta)\frac{1}{\delta}}{\epsilon} \bigg). \notag
	\end{align}
	On the other hand, the comparison complexity is a summation of the results from Lemma~\ref{lem:test-restate} and that of~\cite{zeng2022crowd},
	\begin{align}
		&m_C = (\Talpha+\Tbeta) \cdot O\bigg(\frac{n_{\sqrt{\epsilon},\delta'}}{\beta^2}\cdot \log^2 n_{\sqrt{\epsilon},\delta'} + n_{\epsilon,\delta'} + \frac{1}{\epsilon\beta^2}\log\Big(\frac{1}{\delta'}\Big)\log\Big(\frac{1}{\epsilon\delta'}\Big) \bigg) \notag \\
		&= \frac{\Talpha+\Tbeta}{\beta^2} \cdot O\bigg( \frac{d \log\frac{1}{\epsilon} + \big((\Talpha+\Tbeta)\frac{1}{\delta}\big)^{\frac{1}{1000}} + \log\big((\Talpha+\Tbeta)\frac{1}{\delta}\big)\cdot\big(\log\frac{1}{\epsilon}+\log(\Talpha+\Tbeta)\frac{1}{\delta}\big)}{\epsilon} \bigg) \notag \\	
		&= \frac{1}{\beta^2} \cdot \tilde{O}\bigg( (\Talpha+\Tbeta)^{\frac{1001}{1000}} \cdot \frac{d+\big(\frac{1}{\delta}\big)^{\frac{1}{1000}}}{\epsilon} \notag\bigg) = \frac{1}{\beta^2}\cdot \tilde{O}\big( \big(\Talpha + \Tbeta\big)^{\frac{1001}{1000}} \cdot n_{\epsilon,\delta} \big)
	\end{align}

	These in allusion to $m_{\epsilon,\delta} = K \cdot \big(\frac{1}{\epsilon}(d \log(1/\epsilon)+\log(1/\delta)\big)$ immediately give the overheads as follows:
	\begin{align}
		\labeloverhead &= O\bigg( \frac{\Talpha+\Tbeta}{\alpha^2} \cdot \frac{ \log n_{\sqrt{\epsilon},\delta'} }{m_{\epsilon,\delta}} \cdot  \log {\log n_{\sqrt{\epsilon},\delta'}} \bigg) \notag\\
		&\leq \frac{\Talpha+\Tbeta}{\alpha^2} \cdot \frac{\epsilon}{d+ \log(1/\delta)} \cdot \tilde{O}\bigg( \log\frac{d + \big(\Talpha+\Tbeta\big)\frac{1}{\delta}}{\epsilon} \bigg) \notag\\
		&= \frac{\Talpha+\Tbeta}{\alpha^2}  \log\Big(\Talpha+\Tbeta\Big) \cdot \frac{\epsilon}{d} \cdot \tilde{O}\Big( \log\frac{d}{\epsilon} \Big), \notag
	\end{align}
	and
	\begin{align}
		\compareoverhead &= \frac{\Talpha+\Tbeta}{\beta^2}\cdot O\Bigg(\frac{n_{\sqrt{\epsilon},\delta'}}{m_{\epsilon, \delta}} \cdot \log^2 n_{\sqrt{\epsilon},\delta'} + \frac{n_{\epsilon,\delta'}}{m_{\epsilon,\delta}} + \frac{\frac{1}{\epsilon}\log\big(\frac{1}{\delta'}\big)\log\big(\frac{1}{\epsilon\delta'}\big)}{m_{\epsilon,\delta}} \Bigg) \notag\\
		&\leq \frac{\Talpha+\Tbeta}{\beta^2}\cdot O\Bigg(\sqrt{\epsilon}\cdot \frac{d\log\frac{1}{\epsilon} + \big(\big(\Talpha+\Tbeta\big)\frac{1}{\delta}\big)^{\frac{1}{1000}} }{d\log\frac{1}{\epsilon} + \log\frac{1}{\delta}} \cdot  \log^2 \bigg( \frac{d + \big(\Talpha+\Tbeta\big)\frac{1}{\delta}}{\epsilon} \bigg) \notag\\
		&\quad\quad + \frac{d\log\frac{1}{\epsilon} + \big(\big(\Talpha+\Tbeta\big)\frac{1}{\delta}\big)^{\frac{1}{1000}} }{d\log\frac{1}{\epsilon} + \log(\frac{1}{\delta})} + \frac{\log\big(\big(\Talpha+\Tbeta\big)\frac{1}{\delta}\big)\log\big(\big(\Talpha+\Tbeta\big)\frac{1}{\epsilon\delta}\big)}{d\log\frac{1}{\epsilon}+\log\frac{1}{\delta}} \Bigg) \notag\\ 
		&\leq \frac{1}{\beta^2}\Big(\Talpha+\Tbeta\Big)^{\frac{1001}{1000}}\log^2\Big(\Talpha+\Tbeta\Big)\cdot O\Bigg(\sqrt{\epsilon}\cdot \frac{d\log\frac{1}{\epsilon} + \big(\frac{1}{\delta}\big)^{\frac{1}{1000}} }{d\log\frac{1}{\epsilon} + \log\frac{1}{\delta}} \cdot  \log^2 \bigg( \frac{d + \frac{1}{\delta}}{\epsilon} \bigg) \notag\\
		&\quad\quad  + \frac{d\log\frac{1}{\epsilon} + \big(\frac{1}{\delta}\big)^{\frac{1}{1000}} }{d\log\frac{1}{\epsilon} + \log(\frac{1}{\delta})} + \frac{\log\big(\frac{1}{\delta}\big)\big(\log\frac{1}{\epsilon}+\log\frac{1}{\delta}\big)}{d\log\frac{1}{\epsilon}+\log\frac{1}{\delta}} \Bigg). \notag
	\end{align}
	
	Recall that by the definition of $n_{{\epsilon}, \delta}$ and $m_{\epsilon, \delta}$ in Section~\ref{sec:setup}, we have $m_{\epsilon, \delta} = \Theta(n_{\epsilon, \delta}) = \Theta(\frac{d}{\epsilon} \log\frac{1}{\epsilon})$ when $\delta$ is a constant. In this case, we can see that
	\begin{equation*}
	\compareoverhead \leq \frac{1}{\beta^2}\Big(\Talpha+\Tbeta\Big)^{\frac{1001}{1000}}\log^2\Big(\Talpha+\Tbeta\Big)\cdot O_{\delta}\Big(\sqrt{\epsilon}\cdot \log^2 \frac{d}{\epsilon} + 2 \Big).
	\end{equation*}
	When $\epsilon\in\(0,(\log d)^{-4}\)$, $\compareoverhead = \tilde{O}_{\delta}\Big(\frac{1}{\beta^2}\big(\Talpha+\Tbeta\big)^{\frac{1001}{1000}}\Big)$.
	In addition, $\labeloverhead = o\Big( \frac{\Talpha+\Tbeta}{\alpha^2}\cdot \log(\Talpha+\Tbeta)\Big)$, because $\labeloverhead$ goes to $0$ when $\epsilon$ goes to $0$.
\end{proof}

%

\section{Useful Lemmas}\label{subsec:useful_lemmas}


\begin{theorem}[Boosting, \citet{schapire1990strength}]\label{thm:boost}
For any $p<\frac{1}{2}$ and distribution $\D$, consider three classifiers $h_1(x)$, $h_2(x)$, $h_3(x)$ satisfying the following. 1) $\err_{\D}(h_1) \leq p$; 2) $\err_{\D_2}(h_2) \leq p$ where $\D_2 := \frac{1}{2}\D_C + \frac{1}{2}\D_I$, $\D_C$ denotes the distribution $\D$ conditioned on $\{x: h_1(x)=h^*(x)\}$, and $\D_I$ denotes $\D$ conditioned on $\{x : h_1(x)\neq h^*(x)\}$; 3) $\err_{\D_3}(h_3) \leq p$ where $\D_3$ is $\D$ conditioned on $\{x: h_1(x)\neq h_2(x)\}$. Then $\err_{\D}(\maj(h_1, h_2, h_3)) \leq 3p^2-2p^3$.
\end{theorem}

\begin{lemma}[Robust Super-Sampling Lemma, {Lemma 4.12} in~\citet{awasthi2017efficient}]\label{lem:super}
	Given a hypothesis class $\H$ consider any two discrete distributions $\D$ and $\D'$ over $\X$ such that except for an $\epsilon$ fraction of the mass under $\D$, we have that for all $x$, $d'(x)\geq c\cdot d(x)$ for an absolute constant $c>0$ and both distributions are labeled according to $h^*\in\H$. There exists a constant $c'>1$ such that for any $\epsilon,\delta$, with probability $1-\delta$ over a labeled sample set $S$ of size $c'm_{\epsilon,\delta}$ drawn from $\D'$, $\A_\H(S)$ has error of at most $2\epsilon$ with respect to $D$.
\end{lemma}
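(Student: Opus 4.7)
The plan is to reduce the statement to the standard realizable PAC guarantee on the source distribution $\D'$ together with a direct density comparison on the non-exceptional region. Let $B := \{x \in \X : d'(x) < c\cdot d(x)\}$ denote the ``bad'' region where the density domination fails. By hypothesis, $\D[B] \leq \epsilon$, and on the complement $G := \X \setminus B$ we have $d'(x) \geq c \cdot d(x)$ pointwise. Crucially, since both $\D$ and $\D'$ are labeled according to the same $h^* \in \H$, the problem restricted to $\D'$ is a standard realizable PAC learning instance.

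The key inequality I would establish is
\begin{equation*}
\err_{\D}(h) \;=\; \sum_{x \in B} d(x)\cdot \one\{h(x)\neq h^*(x)\} + \sum_{x \in G} d(x)\cdot \one\{h(x)\neq h^*(x)\} \;\leq\; \D[B] + \frac{1}{c}\,\err_{\D'}(h),
\end{equation*}
where the second term uses $d(x) \leq d'(x)/c$ on $G$ and then extends the sum back to all of $\X$ by nonnegativity. This is the whole structural content of the lemma; everything else is a matter of choosing a sample size large enough to drive $\err_{\D'}(h)$ below $c\epsilon$.

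For the quantitative step, I would invoke Assumption~\ref{as:pac} for the learner $\O_\H$ on $\D'$ with target error $c\epsilon$ and confidence $\delta$. Since $h^*\in\H$ and $\D'$ is realizable, drawing $m_{c\epsilon,\delta}$ samples guarantees $\err_{\D'}(h)\leq c\epsilon$ with probability $1-\delta$. Because $c$ is an absolute constant, the explicit formula $m_{\epsilon,\delta}=K\cdot\tfrac{1}{\epsilon}(d\log\tfrac{1}{\epsilon}+\log\tfrac{1}{\delta})$ gives $m_{c\epsilon,\delta} \leq c'\, m_{\epsilon,\delta}$ for an appropriately chosen absolute constant $c'>1$ (depending only on $c$ and $K$). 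Combining with the inequality above yields $\err_{\D}(h) \leq \epsilon + \tfrac{1}{c}\cdot c\epsilon = 2\epsilon$, as desired.

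There is no serious obstacle here beyond bookkeeping of constants: the realizability of $\D'$ (inherited from the shared labeling by $h^*$) is what permits a clean application of $\O_\H$, and the density-domination condition on $G$ is precisely what converts an $\err_{\D'}$-bound into an $\err_{\D}$-bound up to the $\epsilon$ budget absorbed by the exceptional mass $\D[B]$. The only point to be careful about is that the density comparison is stated for all $x$ outside an $\epsilon$-mass region under $\D$ (not under $\D'$), so the split above must be done with respect to $\D$'s mass on $B$, which is exactly how the inequality is set up.
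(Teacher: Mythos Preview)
The paper does not supply its own proof of this lemma; it is quoted verbatim as Lemma~4.12 of \citet{awasthi2017efficient} in the ``Useful Lemmas'' appendix and invoked as a black box. So there is nothing in the present paper to compare your argument against.

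That said, your proposal is correct and is exactly the standard proof one would expect for this statement. The decomposition $\err_{\D}(h)\leq \D[B]+\tfrac{1}{c}\,\err_{\D'}(h)$ via the pointwise bound $d(x)\leq d'(x)/c$ on $G$ is the right structural step, and invoking the realizable PAC guarantee on $\D'$ with target accuracy $c\epsilon$ closes the argument. Your handling of the constant $c'$ is also sound: from $m_{\epsilon,\delta}=K\cdot\tfrac{1}{\epsilon}(d\log\tfrac1\epsilon+\log\tfrac1\delta)$ one gets $m_{c\epsilon,\delta}\leq \tfrac{1}{c}\bigl(1+\tfrac{\log(1/c)}{\log 2}\bigr)\,m_{\epsilon,\delta}$ for all $\epsilon\leq 1/2$ (and the claim is vacuous for larger $\epsilon$), so an absolute $c'$ depending only on $c$ exists. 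The only implicit assumption you use is monotonicity of the learner in sample size, which is standard.
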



\begin{lemma}[{Lemma 14} in~\citet{zeng2022crowd}]\label{lem:composite-Filter}
	Consider Algorithm~\ref{alg:filter}. Assume that the subset $U$ is correctly labeled. Consider any given instance $x\in S_2$ except for an $\frac{\epsilon}{2}$-fraction, we have the following guarantee. If $h(x)=h^*(x)$, it will be added to $S_I$ with probability at most $\frac14\sqrt{\epsilon}$; if $h(x)\neq h^*(x)$, it goes to $S_I$ with probability at least $\frac{4}{7}$. 
\end{lemma}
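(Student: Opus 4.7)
The plan is to carve $S_2 \setminus U$ into a ``good'' part lying outside $R_1 \cup R_2$ and a ``bad'' part inside, then trace each good $x$ through Algorithm~\ref{alg:filter}. Because Algorithm~\ref{alg:boost} never restarts by hypothesis, \test\ terminated successfully and Lemma~\ref{lem:test} gives $\D_X[R_1 \cup R_2] \le \epsilon/4$ (for $\beta \ge 0.7$, both sets are empty since $\smaj_{\PC}(x,\cdot) \ge \beta > 1 - \beta/2$). A Chernoff bound on the random draw of $S_2$ certifies that at most an $\epsilon/4$ fraction of the instances in $S_2$ lie in $R_1 \cup R_2$; these comprise the exceptional fraction in the lemma.

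For any $x \notin R_1 \cup R_2$, the definitions force $\smaj_{\PC}(x,x^-), \smaj_{\PC}(x,x^+) \ge 1 - \beta/2 > 1 - \beta$; since the low-quality workers alone can contribute at most a $1 - \beta$ fraction to any vote, the population majority must include perfect workers and therefore agree with $Z^*$. A fresh draw $Z_t(x, x^\pm)$ is thus correct with probability at least $1 - \beta/2$, so by Hoeffding and a union bound over the $\Theta(N)$ odd checkpoints with $N = \beta^{-2}\log(1/\epsilon)$, the event $\mathcal{E}(x)$ that some empirical $\maj(Z_{1:t}(x, x^\pm))$ disagrees with $Z^*$ has probability at most $\sqrt{\epsilon}/8$. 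Conditioning on $\mathcal{E}(x)^c$, the $h(x) = h^*(x)$ case is essentially immediate: the augmentation path into $S_I$ is closed since $U, S_{\text{in}}$ are correctly labeled; the ANS $=$ YES path is closed because either the early-filter step fires when $x < x^-$ or $x > x^+$ (since $h$ matches the correct side), or the in-between routing places $x$ into $S_{\text{in}}$ when $x^- < x < x^+$, after which the augmentation cannot re-add it. Hence $\Pr[x \in S_I \mid h(x) = h^*(x)] \le \Pr[\mathcal{E}(x)] \le \sqrt{\epsilon}/8 \le \sqrt{\epsilon}/4$.

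For the case $h(x) \neq h^*(x)$ I aim to lower-bound $\Pr[x \in S_I]$ by the explicit constant $4/7$. Letting $t^* < t$ be the thresholds of $h^*$ and $h$ (WLOG), such an $x$ lies in the disagreement window $(t^*, t)$. The plan is to combine (i)~an order-statistic argument that, with $b = \Theta(\epsilon^{-1/2}\log(1/\delta_2))$, the cutoff points $x^-, x^+$ sandwich $t^*$ within $O(\sqrt{\epsilon})$ mass, so that --- given $\err_{\D_X}(h) \in [\sqrt{\epsilon}/6, \sqrt{\epsilon}/2]$ --- at least a $4/7$ fraction of the disagreement region sits strictly to the right of $x^+$; with (ii)~the majority-faithfulness event $\mathcal{E}(x)^c$ from the previous paragraph, under which correct comparisons against $x^-$ and $x^+$ block both the early-filter trigger (which requires $h$-consistent empirical position) and the in-between trigger (which requires $x < x^+$), forcing ANS $=$ YES and $x \in S_I$. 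The main obstacle is the explicit constant $4/7$: unlike the easy side, it cannot absorb loose concentration constants and requires a careful combinatorial accounting of where $x^\pm$ fall in $U$ relative to $t^*$ and to the disagreement window. My plan is to mirror the counting argument of Lemma~22 in \citet{shen2020efficient}, substituting the new semi-verified majority-correctness from the previous paragraph for their unconditional majority correctness.
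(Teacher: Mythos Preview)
Your proposal has two genuine gaps.

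\textbf{The event $\mathcal{E}(x)$ cannot have probability $\leq \sqrt{\epsilon}/8$.} You define $\mathcal{E}(x)$ as ``\emph{some} empirical $\maj(Z_{1:t}(x,x^{\pm}))$ disagrees with $Z^*$'' and bound it by Hoeffding plus a union bound over the $\Theta(N)$ odd checkpoints. But at $t=1$ the ``majority'' is a single worker's tag, which is wrong with probability up to $\beta/2$; hence $\Pr[\mathcal{E}(x)] \ge \beta/2$, a constant. Summing Hoeffding tails $\exp(-c t)$ over odd $t$ yields a constant (or worse), never $\sqrt{\epsilon}/8$. The paper does not condition on a uniform ``all-checkpoints-correct'' event. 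For $h(x)=h^*(x)$ and $x$ outside $[x^-,x^+]$, the correct observation is that $x\in S_I$ forces the filter to miss at \emph{every} odd $t$, so in particular at the last one, where a single Hoeffding bound with $t=\Theta(N)$ already gives the $\sqrt{\epsilon}/4$ rate. For $x$ inside $[x^-,x^+]$ one likewise needs only the \emph{final} majorities at Step~\ref{step:filter-inside}. This is the content the paper imports from Lemma~4.2 of \citet{awasthi2017efficient}.

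\textbf{The $4/7$ does not come from an order-statistic argument on $x^+$.} Your plan for $h(x)\neq h^*(x)$ asserts that ``at least a $4/7$ fraction of the disagreement region sits strictly to the right of $x^+$,'' then combines this with $\mathcal{E}(x)^c$. But the lemma demands a \emph{per-instance} probability $\geq 4/7$, not a fraction of instances; once $U$ is drawn, $x^-$ and $x^+$ are fixed, and an $x$ lying between $t^*$ and $x^+$ gets nothing from your argument. The paper instead performs the case split $x\notin[x^-,x^+]$ versus $x\in[x^-,x^+]$. For $x$ outside the interval, the constant $4/7$ is exactly Awasthi's interleaving-filter bound: when $\smaj_{\PC}(x,x^{\pm})\geq 1-\beta/2=\tfrac12+\Theta(1)$, one sums $\Pr[\text{majority of }t\text{ votes is wrong}]$ over odd $t$ and obtains at most $3/7$. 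For $x$ inside the interval, the $S_{\text{in}}$ route (Step~\ref{step:filter-inside}) catches $x$ with probability $\geq 1-\sqrt{\epsilon}/4\geq 4/7$, after which the assumed correct labeling of $S_{\text{in}}$ sends $x$ to $S_I$. Your outline never invokes this $S_{\text{in}}$ mechanism for misclassified $x$, which is precisely what rescues the instances near $t^*$.
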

\begin{proof}
	First, for any instance $x \in S_{\text{in}}\cup U$, the guarantee follows from that of \complabel. 
	
	Now consider any instance $x\in S\backslash\{S_{\text{in}}\cup U\}$.
	Recall that Lemma~\ref{lem:test} guarantees the probability mass $\D[R_1\cup R_2] \leq \frac{\epsilon}{4}$ with probability $1-\frac{\delta_2}{4}$. Given that the input size of $S$ is $\Theta(n_{\epsilon,\delta_2})$, with probabiltiy at least $1-\frac{\delta_2}{12}$, the fraction of the instances $x\in S_2$ that falls in region $R_1\cup R_2$ is less than $\frac{\epsilon}{2}$. Therefore, at least $1-\frac{\epsilon}{2}$ fraction of the instances in $S$ falls outside $R_1\cup R_2$ such that $\smaj_{\PC}(x,x^-)\geq1-\frac{\beta}{2}$ and $\smaj_{\PC}(x,x^+)\geq1-\frac{\beta}{2}$ (Eq.~\eqref{eq:R1_R2}). For these instances, we can consider that the majority is correct.
	Now if $x$ is outside the interval $[x^-, x^+]$, when $\beta<0.7$, $\smaj_{\PC}(x,x')\geq \frac12+\Theta(1)$ which suffices for the filtering scheme from~\citet{awasthi2017efficient} to work; when $\beta\geq0.7$, lemma follows from that of~\citet{zeng2022crowd}. If $x$ falls into the interval $[x^-, x^+]$, the probability that it will be added to $S_I$ but $h(x)=h^*(x)$, or it will not be added to $S_I$ but $h(x)\neq h^*(x)$ are both less than $\frac14\sqrt{\epsilon}$. The proof is complete.
\end{proof}


\begin{lemma}[{Lemma 4.7} in~\citet{awasthi2017efficient}] \label{lem:SIsize}
	Consider Algorithm~\ref{alg:boost}. With probability at least $1-\exp(-\Omega(m_{\sqrt{\epsilon},\delta'}))$, $\bar{W}_I$, $\bar{W}_C$ and $S_I$ all have size $\Theta(m_{\sqrt{\epsilon},\delta'})$.
	Consider Algorithm~\ref{alg:boost-comp}. With probability at least $1-\exp(-\Omega(n_{\sqrt{\epsilon},\delta'}))$, $\bar{W}_I$, $\bar{W}_C$ and $S_I$ all have size $\Theta(n_{\sqrt{\epsilon},\delta'})$.
\end{lemma}
\begin{proof}
	Notice that in both Algorithm~\ref{alg:boost} and \ref{alg:boost-comp}, we have  the input sample size $m_{\epsilon,\delta'}, n_{\epsilon,\delta'}$  lower bounded by $\Omega(m_{\epsilon,\delta'})$, the lemma follows from {Lemma 4.7} of~\citet{awasthi2017efficient}.
\end{proof}


\begin{proposition}[Proposition 4 in \citet{zeng2022crowd}]\label{prop:naive_complabel}
Suppose $\alpha\geq0.7,\beta\geq0.7$. Consider the \complabel algorithm, i.e. Algorithm~\ref{alg:label}. If $\abs{S} \geq  (\frac{3}{\delta})^{1/1000}$, then with probability at least $1-\delta$, it correctly sorts and labels all the instances in $S$. The label complexity is $O\big(\log\abs{S}\cdot\log{\log\abs{S}} \big)$, and the comparison complexity is given by $O\big(\abs{S}\cdot\log^2\abs{S}\big)$.
\end{proposition}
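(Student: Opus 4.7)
The plan is to analyze the two subroutines of \complabel separately under the strong-majority regime $\alpha,\beta\geq 0.7$ (in which the pruning branches of \svquicksort and \binarysearch are never triggered by the design of Steps~3 and~5 of those algorithms), and then combine the guarantees by a union bound. I will split the target failure probability $\delta$ into three pieces: one for the event that randomized Quicksort uses too many comparison pairs, one for the event that some comparison majority vote is wrong, and one for the event that some label majority vote is wrong.

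\textbf{Step 1 (Quicksort structural bound).} First I would invoke the standard high-probability guarantee for randomized Quicksort: for any $c\geq 1$, the total number of compared pairs is at most $(c+2)n\log n$ with probability at least $1-n^{-c}$. Taking $c=1000$ and using the hypothesis $|S|\geq (3/\delta)^{1/1000}$ (which is exactly the reason for the otherwise mysterious exponent $1/1000$ in the statement), this failure probability is at most $\delta/3$. So with probability $\geq 1-\delta/3$ the number of comparison pairs processed by \svquicksort is at most $1002\, n\log n = O(n\log n)$.

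\textbf{Step 2 (Comparison correctness).} Conditioned on the above, I would bound the probability that any of the $\leq 1002\,n\log n$ comparison majority votes is wrong. Because $\beta\geq 0.7$, for each pair $(x,x')$ the probability that a random worker from $\PC$ agrees with $Z^*(x,x')$ is at least $0.7$, so by Hoeffding applied to the $k_1=\frac{1}{2\beta^2}\log\frac{3006\,n\log n}{\delta_1}$ independent worker draws, the probability that $\maj_{\WC}(x,x')\neq Z^*(x,x')$ is at most $\exp(-2k_1(\beta-\tfrac12)^2)\leq \frac{\delta_1}{3006\,n\log n}$ (with $\delta_1=\delta$, say). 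A union bound over the $\leq 1002\,n\log n$ pairs bounds the total failure probability here by $\delta/3$. Hence with probability $\geq 1-2\delta/3$, \svquicksort outputs a list $\hat S$ sorted consistently with $Z^*=\sign{f^*(x)-f^*(x')}$, i.e.\ sorted by the true $f^*$-values.

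\textbf{Step 3 (Binary search correctness).} Given that $\hat S$ is correctly sorted, $h^*(\hat x_t)$ is monotone along the sorted order, so \binarysearch only needs to locate the threshold index, requiring at most $\log n$ label queries, each answered by the majority of $k_2=\frac{32}{\alpha^2}\log\frac{6\log n}{\delta_1}$ workers. Because $\alpha\geq 0.7$, a Hoeffding bound on the $k_2$ draws gives probability of an incorrect per-instance majority at most $\delta/(3\log n)$; a union bound over the $\log n$ queried instances bounds the binary-search failure by $\delta/3$.

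\textbf{Step 4 (Combine and count).} Combining Steps~1--3 by a union bound gives total failure probability at most $\delta$, and on the good event every instance of $S$ is correctly sorted and correctly labeled. For query complexity, the label cost is $\log n \cdot k_2 = O(\log n \cdot \log\log n)$ as claimed, and the comparison cost is (\# pairs) $\cdot\, k_1 = O(n\log n)\cdot O(\log n) = O(n\log^2 n)$. The only slightly delicate point is Step~1, where the polynomially-small failure of Quicksort's runtime guarantee must be driven below $\delta/3$; this is precisely what the assumption $|S|\geq (3/\delta)^{1/1000}$ buys us, and it is the main (mild) obstacle in the proof, since one cannot obtain this failure probability via Chernoff-type concentration on a constant-size worker pool alone.
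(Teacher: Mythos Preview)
Your proof is correct and matches the paper's approach. The paper does not actually give its own proof of Proposition~\ref{prop:naive_complabel} (it is imported verbatim from \citet{shen2020efficient}), but the argument you outline---the $1-n^{-c}$ Quicksort runtime bound with $c=1000$ to exploit the hypothesis $|S|\geq(3/\delta)^{1/1000}$, Hoeffding/Chernoff on the $k_1$ and $k_2$ worker draws, and a union bound over the $O(n\log n)$ comparison pairs and $\log n$ binary-search queries---is exactly the structure the paper uses when proving the semi-verified generalizations in Lemmas~\ref{lem:beta-restate} and~\ref{lem:alpha-restate}, and your query-complexity accounting (including the observation that the size assumption forces $\log(1/\delta)=O(\log n)$ so that $k_1=O(\log n)$) is correct.
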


\begin{theorem}[Theorem 8 in \citet{zeng2022crowd}]\label{thm:base}
Suppose $\alpha\geq0.7,\beta\geq0.7$. With probability $1-\delta$, Algorithm~\ref{alg:boost-comp} runs in time $\poly(d,\frac{1}{\epsilon})$ and returns a classifier $h\in\H$ with error rate $\err_{\D}({h})\leq\epsilon$.
In addition, the label complexity is $O\Big(\log n_{\sqrt\epsilon,\delta} \cdot \log\log n_{\sqrt\epsilon,\delta} \Big)$, and the comparison complexity is $O\( n_{\sqrt{\epsilon},\delta}\log^2 n_{\sqrt{\epsilon},\delta}+n_{\epsilon,\delta}\)$.
\end{theorem}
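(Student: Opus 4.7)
The plan is to assemble Theorem~\ref{thm:main_bothAlphaBeta_restate} from three previously proven building blocks --- (i) the per-phase correctness statements (Corollary~\ref{coro:h1-restate}, Lemma~\ref{lem:h2-restate}, Corollary~\ref{coro:h3-restate}), (ii) the boosting theorem (Theorem~\ref{thm:boost}), and (iii) the restart accounting (Lemma~\ref{lem:restart_restate}) --- and then to translate the resulting raw query counts into the stated label/comparison complexities and the two overhead expressions.

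First I would establish correctness and bound the verified-query usage, conditional on the event that the main algorithm eventually halts. With $\delta' = \delta/\tau = \delta/\Theta(1/\alpha + 1/\beta)$ as prescribed by Algorithm~\ref{alg:boost}, each weak hypothesis $h_1,h_2,h_3$ attains error at most $\sqrt{\epsilon}/2$ on its designated distribution with probability $1 - \delta'/3$. Applying Theorem~\ref{thm:boost} with $p = \sqrt{\epsilon}/2$ then gives $\err_{\D_X}(\hat h) \le 3p^2 - 2p^3 < \epsilon$ for $\hat h = \maj(h_1, h_2, h_3)$. Lemma~\ref{lem:restart_restate} caps the total number of restarts at $O(1/\alpha + 1/\beta)$ and the number of verified labels and verified comparisons at $O(1/\alpha)$ and $O(1/\beta)$, respectively; a union bound over the $\tau$ possible runs (each with confidence $\delta'$) controls the total failure probability by $\delta$.

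Next I would count the crowd queries. Within any single run that terminates without restart, Theorem~\ref{thm:base} supplies the base label and comparison complexity, to which I add the $O\bigl(\tfrac{1}{\epsilon\beta^2}\log\tfrac{1}{\delta'}\log\tfrac{1}{\epsilon\delta'}\bigr)$ comparisons spent by \test{} inside \filter{} (Lemma~\ref{lem:test-restate}). Multiplying by the $O(1/\alpha + 1/\beta)$ restart bound and substituting $1/\delta' = \Theta((1/\alpha+1/\beta)/\delta)$ into $n_{\sqrt\epsilon,\delta'}$ and $n_{\epsilon,\delta'}$ produces the stated expressions for $m_L$ and $m_C$. In particular, the additive $(1/\delta)^{1/1000}$ term appearing in $m_C$ can be traced back to the additive $(1/\delta')^{1/1000}$ factor in $n_{\epsilon,\delta'}$, which itself originates from the inverse-polynomial failure probability of \quicksort; everywhere else, the extra $\log(1/\delta')$ factors are swallowed into the $\tilde O(\cdot)$ notation.

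Finally, the overhead bounds follow from dividing by $m_{\epsilon,\delta} = \Theta\bigl(\tfrac{1}{\epsilon}(d\log\tfrac{1}{\epsilon}+\log\tfrac{1}{\delta})\bigr)$. The ratio $n_{\sqrt\epsilon,\delta'}/m_{\epsilon,\delta}$ contributes an $\epsilon/d$-type factor that drives $\labeloverhead \to 0$ as $\epsilon \to 0$, yielding the claimed $o\bigl(\tfrac{1}{\alpha^2}(\tfrac{1}{\alpha}+\tfrac{1}{\beta})\log(\tfrac{1}{\alpha}+\tfrac{1}{\beta})\bigr)$ bound immediately. I expect the main obstacle to be the comparison overhead: its leading term behaves like $\sqrt\epsilon\,\log^2(d/\epsilon)$, which collapses to $\tilde O_\delta(1)$ only under the regime $\epsilon \in (0,(\log d)^{-4})$, and one must carefully track how the $\log^2 n_{\sqrt\epsilon,\delta'}$ factor from \svquicksort{} multiplies with the $\sqrt\epsilon$ gain from boosting, while verifying that the additive $(1/\delta)^{1/1000}$ in the comparison count is absorbed into $\tilde O_\delta$ rather than scaling with $\alpha,\beta$ in a way that would break the stated exponent $\tfrac{1001}{1000}$.
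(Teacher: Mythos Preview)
Your proposal targets the wrong statement. Theorem~\ref{thm:base} is not proved in this paper at all: it is quoted verbatim from \citet{shen2020efficient} and listed in the ``Useful Lemmas'' appendix as an external input, with no accompanying proof. What you have written is instead a proof sketch for Theorem~\ref{thm:main_bothAlphaBeta_restate}, the main result of the present paper. Indeed, your second paragraph explicitly invokes Theorem~\ref{thm:base} as a black box (``Theorem~\ref{thm:base} supplies the base label and comparison complexity''), so as a proof of Theorem~\ref{thm:base} itself the argument would be circular.

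If the intended task is to prove Theorem~\ref{thm:base} as stated, the relevant setting is $\alpha\ge 0.7$, $\beta\ge 0.7$, where the semi-verified machinery (pruning, \test{}, restarts, verified queries) is never triggered; the branches in Algorithms~\ref{alg:robust_quick_sort}--\ref{alg:robust_binary_search} always take the majority-vote path, and Lemma~\ref{lem:restart_restate} contributes nothing. The proof then lives entirely in \citet{shen2020efficient} and consists of (i) Proposition~\ref{prop:naive_complabel} for the per-call cost of sorting and labeling, (ii) the filtering analysis showing Phase~2 simulates $\D_2$, and (iii) Theorem~\ref{thm:boost}; none of the $1/\alpha$, $1/\beta$, or restart-multiplier factors in your write-up belong in that argument. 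If, on the other hand, you meant to sketch Theorem~\ref{thm:main_bothAlphaBeta_restate}, your outline matches the paper's proof quite closely --- but that is a different statement.
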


\end{document}